\newcolumntype{L}{>{\arraybackslash}X}
\newcolumntype{Y}{>{\centering\arraybackslash}X}
\def\ie{{\textit{i.e.}}}
\def\eg{{\textit{e.g.}}}
\def\sics{\mbox{Scale-inv-}\mathcal{X}^2}
\title{On the Variance of the Adaptive Learning Rate and Beyond}
\author{Liyuan Liu ~\thanks{Work was done during an internship at Microsoft Dynamics 365 AI.}\\
\footnotesize{University of Illinois, Urbana-Champaign} \\
\scriptsize{\texttt{ll2@illinois}}\\
\And
Haoming Jiang~\thanks{Work was done during an internship at Microsoft Dynamics 365 AI.}\\
\footnotesize{Georgia Tech}\\
\scriptsize{\texttt{jianghm@gatech.edu}}\\
\And
Pengcheng He, Weizhu Chen\\
\footnotesize{Microsoft Dynamics 365 AI}\\
\scriptsize{\texttt{\{penhe,wzchen\}@microsoft.com}}\\
\And
Xiaodong Liu, Jianfeng Gao\\
\footnotesize{Microsoft Research}\\
\scriptsize{\texttt{\{xiaodl,jfgao\}@microsoft.com}}\\
\And
Jiawei Han\\
\footnotesize{University of Illinois, Urbana-Champaign}\\
\scriptsize{\texttt{hanj@illinois}}
}
\begin{document}

\maketitle

% !TEX encoding = UTF-8
% !TEX Root = 0_main.tex

\begin{abstract} 
The learning rate warmup heuristic achieves remarkable success in stabilizing training, accelerating convergence and improving generalization for adaptive stochastic optimization algorithms like RMSprop and Adam.
Pursuing the theory behind warmup,
we identify a problem of the adaptive learning rate -- its variance is problematically large in the early stage, and presume warmup works as a variance reduction technique.
We provide both empirical and theoretical evidence to verify our hypothesis.
We further propose Rectified Adam (RAdam), a novel variant of Adam, by introducing a term to rectify the variance of the adaptive learning rate.
Experimental results on image classification, language modeling, and neural machine translation verify our intuition and demonstrate the efficacy and robustness of RAdam.\footnote{All implementations are available at: \url{https://github.com/LiyuanLucasLiu/RAdam}.}

% Here, we study its mechanism in details. 
% This paper presents the theoretical justification of the heuristic, showing that its effectiveness is due to the fact that it reduces the problematically large variance of the adaptive learning rate in the early stage.

%Further analysis derives a reliable and concise variance approximation, and we propose Rectified Adam,
% \XD{What RAdam refers to? rectified Adam?}
%a new variant of the Adam algorithm which rectifies the variance of the adaptive learning rate.
%We conduct experiments on various datasets and observe that RAdam leads to consistent improvements over the vanilla Adam, which demonstrates that the variance issue generally exists and affects training stability and model performance~\footnote{All implementations are available at: \url{https://github.com/LiyuanLucasLiu/RAdam}}.
% \XD{Font of x-y axis is too small and hard to see. Pls make it bigger.}
% and fixing such issues leads to better robustness and efficiency. 
\end{abstract}

\section{Introduction}
\vspace{-0.1cm}

\begin{wrapfigure}{r}{0.35\textwidth} 
\centering
\vspace{-0.8cm}
\includegraphics[width=0.35\textwidth]{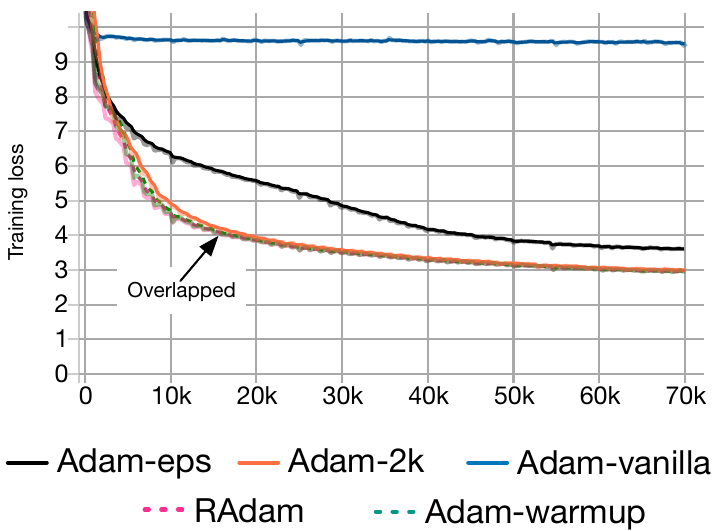}
\vspace{-0.6cm}
\caption{Training loss v.s. \# of iterations of Transformers on the De-En IWSLT'14 dataset. }
\vspace{-0.4cm}
\label{fig:eps2k}
\end{wrapfigure}

% The development of 
% fast and stable optimization algorithms 
% is one of the longest running goals in computer science~\citep{cauchy1847methode}.
Fast and stable optimization algorithms are what generations of researchers have been pursuing~\citep{gauss1823theoria,cauchy1847methode}.
Remarkably, stochastic gradient-based optimization, such as stochastic gradient descent (SGD), has witnessed tremendous success in many fields of science and engineering despite its simplicity.
%For example, stochastic gradient descent (SGD) has demonstrated its effectiveness in extensive machine learning applications. 
% Recently, many new methods have been proposed 
Recently, many efforts have been made 
to accelerate optimization by applying \textit{adaptive learning rate}. 
In particular, Adagrad~\citep{duchi2011adaptive} and its variants, \eg, RMSprop~\citep{tieleman2012lecture}, Adam~\citep{kingma2014adam}, Adadelta~\citep{zeiler2012adadelta} and Nadam~\citep{dozat2016incorporating},
% have been widely used due to their fast convergence.
stand out due to their fast convergence, and have been considered as the optimizer of choice in many applications. 

%Despite their fast convergence, it has been noticed that in many cases, 
However, it has been observed that these optimization methods may converge to bad/suspicious local optima,
% or even diverge, 
% \HMJ{Actually no experiment show that they diverges. $==>$ ..optima no matter how the hyper-parameter is chosen.}
and have to resort to a warmup heuristic -- using a small learning rate in the first few epochs of training to mitigate such problem~\citep{vaswani2017attention, popel2018training}.  
% require a warmup stage (i.e., using a small learning rate in the first few epochs); otherwise, the optimization converges to suspicious/bad local optima or even explodes~\citep{vaswani2017attention, popel2018training}.
For example, when training typical Transformers based neural machine translation models
on the
De-En IWSLT'14 dataset, 
removing the warmup stage
% stage raises the final 
increases the training loss 
% of a typical Transformer based neural machine translation model
from 3 to around 10, as shown in Figure~\ref{fig:eps2k}. 
Similar phenomena are observed in other scenarios like BERT (a bidirectional transformer language model) pre-training~\citep{devlin2018bert}.% \XD{Require a few words to explain BERT...}

% Since the theoretical underpinnings of the warmup are lacking,
Due to the lack of the theoretical underpinnings,
there is neither guarantee that warmup would 
% work for various machine learning settings 
bring consistent improvements for various machine learning settings 
nor guidance on how we should conduct warmup. 
Thus, researchers typically use different settings in different applications and have to take a trial-and-error approach,
% which can be tedious and time-consuming for many large-scale machine learning tasks, such as language modeling and image classification. 
which can be tedious and time-consuming. 

% In this paper, we conduct an in-depth analysis of the convergence issue, 
In this paper, we conduct both empirical and theoretical analysis of the convergence issue to identify its origin.
% In this paper, we conduct both empirical and theoretical analysis to identify the origin of the convergence issue.
%Here, we analyze this phenomenon, study why we need warmup, and explore the optimization issue it tries to address. 
%Specifically, 
We show that its root cause is: the adaptive learning rate has undesirably large variance in the early stage of model training, due to the limited amount of training samples being used. 
Thus, to reduce such variance, it is better to use smaller learning rates in the first few epochs of training, which justifies the warmup heuristic.
%We show that due to the limited sample size, the adaptive learning rate has undesirably large variance in the early stage of training, which obstructs the optimization process.

Inspired by our analysis results, we propose a new variant of Adam, called Rectified Adam (RAdam), which explicitly rectifies the variance of the adaptive learning rate based on derivations.
We conduct extensive experiments on language modeling, image classification, and neural machine translation.
RAdam brings consistent improvement over the vanilla Adam, which verifies the variance issue generally exists on various tasks across different network architectures.

In summary, our main contributions are two-fold:
% \begin{itemize}[leftmargin=*,topsep=0pt]
\begin{itemize}[leftmargin=*]
    \item
    \vspace{-0.1in}
    We identify the variance issue of the adaptive learning rate and present a theoretical justification for the warmup heuristic. 
    We show that the convergence issue is due to the undesirably large variance of the adaptive learning rate in the early stage of model training.
    \item 
    We propose a new variant of Adam (\ie, RAdam), which not only explicitly rectifies the variance and is theoretically sound, but also compares favorably with the heuristic warmup.
\end{itemize}

\section{Preliminaries and Motivations}
% \vspace{-0.1in}

\noindent
\textbf{Generic adaptive methods.} Algorithm~\ref{algo:adaptive} is a generic framework (all operations are element-wise).
It describes various popular stochastic gradient descent algorithms~\citep{reddi2019convergence}.
% like Adam~\citep{kingma2014adam} and, Adagrad~\citep{duchi2011adaptive} and RMSprop~\citep{tieleman2012lecture}.
Specifically, different optimization algorithms can be specified by different choices of $\phi(.)$ and $\psi(.)$, where
$\phi(.)$ specifies how the momentum at time step $t$ is calculated, and $\psi(.)$ how the adaptive learning rate at $t$ is calculated.
For example, 
% in the vanilla SGD, these two functions are set to:
% \begin{align*}
% \vspace{-0.1in}
% \phi(g_1, \cdots, g_t) = g_t \quad\mbox{and}\quad \psi(g_1, \cdots, g_t) = 1.
% \vspace{-0.2in}
% \end{align*}
% And 
in the Adam algorithm, we have:
\begin{align}
% \vspace{-0.2in}
\phi(g_1, \cdots, g_t) = \frac{(1 - \beta_1)\sum_{i = 1}^t \beta_1^{t-i}g_i}{1 - \beta_1^t} 
\quad\mbox{and}\quad 
\psi(g_1, \cdots, g_t) = \sqrt{\frac{1-\beta_2^t}{(1 - \beta_2)\sum_{i = 1}^t \beta_2^{t-i}g_i^2}}.
\label{eqn:adam}
%\vspace{0.1in}
\end{align}
For numerical stability, the function $\psi(.)$ in Equation~\ref{eqn:adam} is usually calculated as $\hat{\psi}(g_1, \cdots, g_t) = \frac{\sqrt{1-\beta_2^t}}{\epsilon + \sqrt{(1 - \beta_2)\sum_{i = 1}^t \beta_2^{t-i}g_i^2}}$, 
where $\epsilon$ is a relatively small / negligible value (\eg, $1\times 10^{-8}$). 
% \vspace{-0.1in}
% \setlength{\textfloatsep}{-1pt}

\begin{algorithm}[!ht]
\DontPrintSemicolon
\KwIn{$\{\alpha_t\}_{t = 1}^T$: step size, $\{\phi_t, \psi_t\}_{t = 1}^T$: function to calculate momentum and adaptive rate, \newline
$\theta_0$: initial parameter, $f(\theta)$: stochastic objective function.}
\KwOut{$\theta_T$: resulting parameters}
\While{$t = 1$ to $T$}{
    $g_t \gets \nabla_{\theta} f_t(\theta_{t - 1})$ (Calculate gradients w.r.t. stochastic objective at timestep t)\;
    $m_t \gets \phi_t (g_1, \cdots, g_t)$ (Calculate momentum)\;
    $l_t \gets \psi_t (g_1, \cdots, g_t)$ (Calculate adaptive learning rate)\;
    $\theta_t \gets \theta_{t-1} - \alpha_t m_t l_t$ (Update parameters)\;
}
\Return{$\theta_T$}
\caption{Generic adaptive optimization method setup. All operations are element-wise. }
\label{algo:adaptive}
\end{algorithm}

% \begin{figure}[!htb]
\begin{figure}[t]
\centering
\vspace{-0.3in}
    \includegraphics[width=\textwidth]{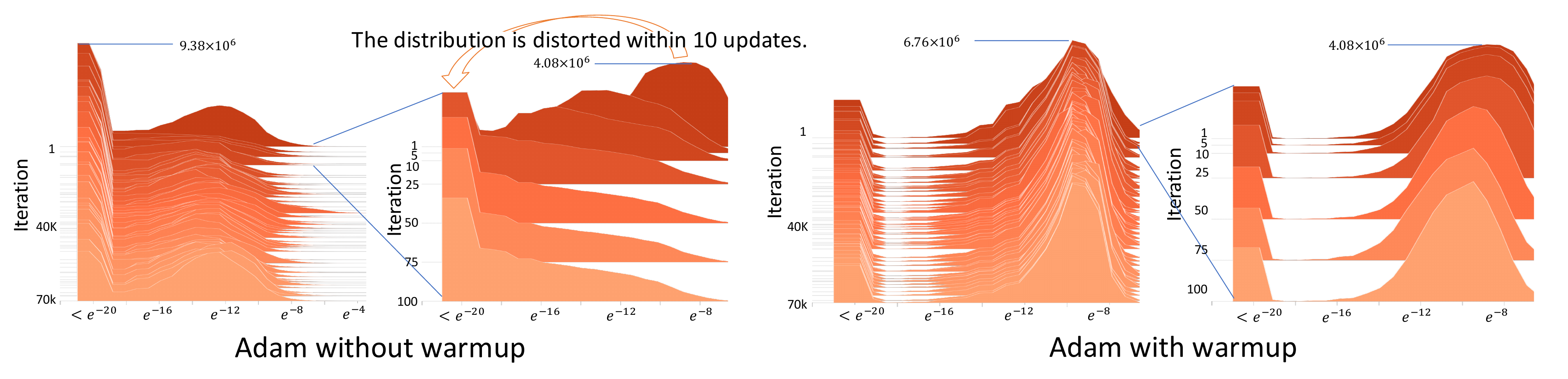}
\vspace{-0.6cm}
\caption{The absolute gradient histogram of the Transformers on the De-En IWSLT' 14 dataset during the training (stacked along the y-axis). X-axis is absolute value in the log scale and the height is the frequency. Without warmup, the gradient distribution is distorted in the first 10 steps. }
\vspace{-0.3cm}
\label{fig:histogram_2}
\end{figure}

\noindent
\textbf{Learning rate warmup.} Instead of setting the learning rate $\alpha_t$ as a constant or in a decreasing order, 
% a learning rate warmup strategy sets $\alpha_t$ as some small values in the first few steps.
a learning rate warmup strategy sets $\alpha_t$ as smaller values in the first few steps, thus not satisfying $\forall t\, \alpha_{t+1} \leq \alpha_{t}$.
% and decrease $\alpha_t$ after $T_w$ steps. 
For example, linear warmup sets $\alpha_t = t \,\alpha_0$ when $t < T_w$.
%In many machine learning applications, e.g., neural machine translation, it has been empirically observed that these adaptive stochastic optimization algorithms require a learning rate warmup stage, (\eg, a widely used warmup strategy is to set the learning rate as $\alpha_t = t \,\alpha_0$ in the first $T_w$ steps, \ie, $t < T_w$); otherwise, the loss is trapped in bad/suspicious local optima. S
Warmup has been demonstrated to be beneficial in many deep learning applications.
For example, in the NMT experiments in Figure~\ref{fig:eps2k}, the training loss convergences around 10 when warmup is not applied (Adam-vanilla), and it surprisingly decreases to below 3 after applying warmup (Adam-warmup).

To further analyze this phenomenon, we visualize the histogram of the absolute value of gradients on a log scale in Figure~\ref{fig:histogram_2}.
We observe that, without applying warmup, the gradient distribution is distorted to have a mass center in relatively small values within 10 updates. Such gradient distortion means that the vanilla Adam is trapped in bad/suspicious local optima after the first few updates. 
Warmup essentially reduces the impact of these problematic updates to avoid the convergence problem.
%Intuitively, since the widely used adaptive learning rates are designed as the inverse scale of the gradient (\eg, $\psi(.)$ in Equation~\ref{eqn:adam}), small gradients would have larger adaptive learning rates, and random fluctuations of these gradients would be amplified and hinder the optimizer from advancing model training. 
% Therefore, the vanilla Adam will be trapped in this stage after the first few updates
%Therefore, such gradient distortion can trap the vanilla Adam after the first few updates, and it is necessary to fix those problematic updates to avoid the convergence problems.
In the following sections, we focus our analysis on learning rate warmup for the Adam algorithm, while it can be applied to other algorithms that use similar adaptive learning rate ($\psi(.)$) designs, \eg, RMSprop~\citep{tieleman2012lecture} and Nadam~\citep{dozat2016incorporating}.

\vspace{-0.1cm}
\section{Variance of the Adaptive Learning Rate}
\label{sec:ana}
\vspace{-0.1cm}

In this section, we first introduce empirical evidence, then analyze the variance of the adaptive learning rate to support our hypothesis 
-- \emph{Due to the lack of samples in the early stage, the adaptive learning rate has an undesirably large variance, which leads to suspicious/bad local optima}.

% To begin with, we first analyze a special case.

To convey our intuition, we begin with a special case. 
When $t = 1$, we have $\psi(g_1) = \sqrt{1/g_1^2}$. 
We view $\{g_1, \cdots, g_t\}$ as i.i.d. Gaussian random variables following $\cN(0, \sigma^2)$\footnote{The mean zero normal assumption is valid at the beginning of the training, since weights are sampled from normal distributions with mean zero \citep{balduzzi2017shattered}, further analysis is conducted in Section~\ref{subsec:sma-ema}.}.
Therefore, $1/g_1^2$ is subject to the scaled inverse chi-squared distribution, $\sics(1, 1/\sigma^2)$, and $\Var[\sqrt{1/g_1^2}]$ is divergent.
% Noted 
% $\Var[\sqrt{1/g_1^2}] \propto \int_0^{\infty} x^{-1} e^{-x} dx$
% and it is divergent. 
It means that the adaptive ratio can be undesirably large in the first stage of learning.
Meanwhile, setting a small learning rate at the early stage can reduce the variance ($\Var[\alpha x] = \alpha^2\Var[x]$), thus alleviating this problem.
Therefore, we suggest it is the unbounded variance of the adaptive learning rate in the early stage
that causes the problematic updates. 

\begin{figure}[t]
\centering
    \includegraphics[width=0.8\textwidth]{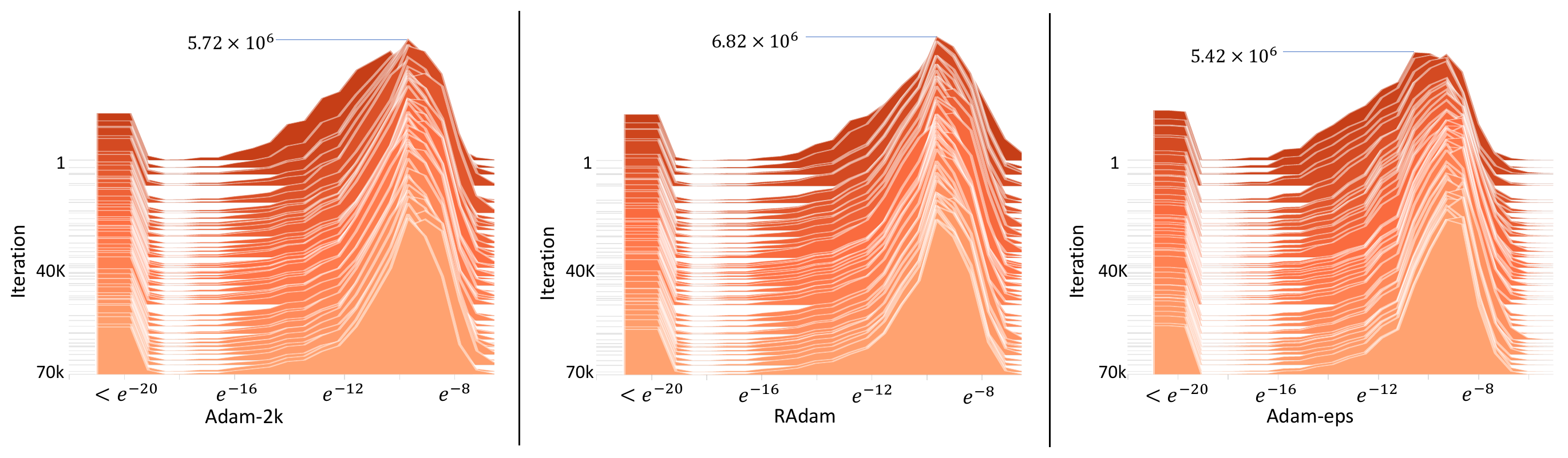}
\vspace{-0.3cm}
\caption{The histogram of the absolute value of gradients (on a log scale) during the training of Transformers on the De-En IWSLT' 14 dataset. using Adam-2k, RAdam and Adam-eps. }
\vspace{-0.4cm}
\label{fig:histogram_3}
\end{figure}

\vspace{-0.1cm}
\subsection{Warmup as Variance Reduction}
\vspace{-0.1cm}
In this section, we design a set of controlled experiments to verify our hypothesis. 
Particularly, we design two variants of Adam that reducing the variance of the adaptive learning rate: \textit{Adam-2k} and \textit{Adam-eps}. We compare them to vanilla Adam with and without warmup on the IWSLT'14 German to English translation dataset~\citep{cettolo2014report}. 

In order to reduce the variance of the adaptive learning rate ($\psi(.)$), Adam-2k only updates $\psi(.)$ in the first two thousand iterations, while the momentum ($\phi(.)$) and parameters ($\theta$) are fixed\footnote{Different from \cite{gotmare2018a}, all parameters and first moments are frozen in the first 2000 iterations.}; other than this, it follows the original Adam algorithm.
To make comparison with other methods, its iterations are indexed from -1999 instead of 1.
In Figure~\ref{fig:eps2k}, we observe that, after getting these additional two thousand samples for estimating the adaptive learning rate, Adam-2k avoids the convergence problem of the vanilla-Adam. 
Also, comparing Figure~\ref{fig:histogram_2} and Figure~\ref{fig:histogram_3}, getting large enough samples prevents the gradient distribution from being distorted. 
These observations verify our hypothesis that the lack of sufficient data samples in the early stage is the root cause of the convergence issue. 

Another straightforward way to reduce the variance is to increase the value of $\epsilon$ in $\hat{\psi}(g_1, \cdots, g_t) = \frac{\sqrt{1-\beta_2^t}}{\epsilon + \sqrt{(1 - \beta_2)\sum_{i = 1}^t \beta_2^{t-i}g_i^2}}$.
Actually, if we assume $\hat{\psi}(.)$ is subject to the uniform distribution, its variance equals to $\frac{1}{12 \epsilon^2}$.
Therefore, we design Adam-eps, which uses a non-negligibly large $\epsilon = 10^{-4}$, while $\epsilon = 10^{-8}$ for vanilla Adam.
Its performance is summarized in Figure~\ref{fig:eps2k}. 
We observe that it does not suffer from the serious convergence problem of vanilla-Adam. This further demonstrates that the convergence problem can be alleviated by reducing the variance of the adaptive learning rate, and also explains why tuning $\epsilon$ is important in practice \citep{liu2019roberta}. 
Besides, similar to Adam-2k, it prevents the gradient distribution from being distorted (as shown in Figure~\ref{fig:histogram_3}). 
% However, the experimental results in Figure~\ref{fig:eps2k} show that it produces a much worse performance comparing to Adam-2k and Adam-warmup. 
However, as in Figure~\ref{fig:eps2k}, it produces a much worse performance comparing to Adam-2k and Adam-warmup. 
We conjecture that this is because large $\epsilon$ induces a large bias into the adaptive learning rate and slows down the optimization process. 
Thus, we need a more principled and rigorous way to control the variance of the adaptive learning rate. 
In the next subsection, we will present a theoretical analysis of the variance of the adaptive learning rate.

\vspace{-0.1cm}
\subsection{Analysis of Adaptive Learning Rate Variance}
\vspace{-0.1cm}
As mentioned before, Adam uses the exponential moving average to calculate the adaptive learning rate. 
For gradients $\{g_1, \cdots, g_t\}$, their exponential moving average has a larger variance than their simple average. 
Also, in the early stage ($t$ is small), the difference of the exponential weights of $\{g_1, \cdots, g_t\}$ is relatively small (up to $1 - \beta_2^{t-1}$).
Therefore, for ease of analysis, we approximate the distribution of the exponential moving average as the distribution of the simple average~\citep{nau2014forecasting}, \ie, $p(\psi(.)) = p(\sqrt{\frac{1-\beta_2^t}{(1 - \beta_2)\sum_{i = 1}^t \beta_2^{t-i}g_i^2}}) \approx p(\sqrt{\frac{t}{\sum_{i=1}^t g_i^2}})$. Since $g_i \sim \gN(0, \sigma^2)$, we have $\frac{t}{\sum_{i=1}^t g_i^2} \sim \sics(t, \frac{1}{\sigma^2})$. 
Therefore, we assume $\frac{1-\beta_2^t}{(1 - \beta_2)\sum_{i = 1}^t \beta_2^{t-i}g_i^2}$ also subjects to a scaled inverse chi-square distribution with $\rho$ degrees of freedom (further analysis on this approximation is conducted in Section~\ref{subsec:sma-ema}). 
Based on this assumption, we can calculate $\Var[\psi^2(.)]$ and the PDF of $\psi^2(.)$.
Now, we proceed to the analysis of its square root variance, \ie, $\Var[\psi(.)]$, and show how the variance changes with $\rho$ (which corresponds to number of used training samples). 

\vspace{-0.1cm}
\begin{theorem}
If $\psi^2(.) \sim \sics(\rho, \frac{1}{\sigma^2})$, $\Var[\psi(.)]$ monotonically decreases as $\rho$ increases.
\label{theorem: variance_mono}
\end{theorem}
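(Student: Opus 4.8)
The plan is to reduce the statement to the negative-order moments of a chi-squared variable and then settle an analytic inequality for the resulting Gamma/digamma expression. First I would make the representation explicit: since $\psi^2 \sim \sics(\rho, 1/\sigma^2)$ we may write $\psi^2 = \rho/(\sigma^2 Z)$ with $Z \sim \chi^2_\rho$, so $\psi = \sqrt{\rho}\,(\sigma\sqrt{Z})^{-1}$, and the dependence of $\Var[\psi]$ on $\rho$ is controlled entirely by how the moments $E[Z^{-1/2}]$ and $E[Z^{-1}]$ vary with the degrees of freedom $\rho$ (treated as a continuous parameter, as dictated by the approximation in Section~\ref{subsec:sma-ema}).

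Second, using $E[Z^{s}] = 2^{s}\Gamma(\rho/2+s)/\Gamma(\rho/2)$ I would record the closed forms $E[\psi^2] = \rho/(\sigma^2(\rho-2))$ and $E[\psi] = \sigma^{-1}\sqrt{\rho/2}\,R(\rho)$ with $R(\rho) := \Gamma((\rho-1)/2)/\Gamma(\rho/2)$, both finite exactly for $\rho > 2$ (for $\rho \le 2$ the variance diverges, consistent with the $t=1$ special case already noted). Subtracting gives
$$\Var[\psi] = \frac{1}{\sigma^2}\Big[\frac{\rho}{\rho-2} - \frac{\rho}{2}R(\rho)^2\Big],$$
so it suffices to show the bracketed function $V(\rho)$ is decreasing. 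Differentiating and using $R'(\rho) = \tfrac12 R(\rho)\big[\varphi((\rho-1)/2) - \varphi(\rho/2)\big]$, where $\varphi := (\log\Gamma)'$ is the digamma function, reduces the target $V'(\rho) < 0$ to the single inequality
$$R(\rho)^2\big(\rho\,\Delta - 1\big) < \frac{4}{(\rho-2)^2}, \qquad \Delta := \varphi(\rho/2) - \varphi((\rho-1)/2) > 0.$$

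The hard part is bounding the digamma difference $\Delta$ sharply enough, and the difficulty is a cancellation of the leading asymptotics: $R(\rho)^2 \sim 2/\rho$ and $\Delta \sim 1/\rho$ make both sides of order $\rho^{-2}$, so any crude estimate such as $\Delta < 2/(\rho-1)$ is too lossy (it fails once $\rho > 3$) and I must capture the first correction term. I would get it from the integral representation $\Delta = \int_0^\infty e^{-(\rho-1)t/2}/(1+e^{-t/2})\,dt$ together with the concavity estimate $1/(1+e^{-t/2}) \le \tfrac12 + \tfrac{t}{8}$, giving the two-sided bound $\tfrac{1}{\rho-1} < \Delta \le \tfrac{2\rho-1}{2(\rho-1)^2}$; the lower bound guarantees $\rho\Delta - 1 > 0$, so that the upper Gamma-ratio bound $R(\rho)^2 \le 2/(\rho-2)$ (Gautschi's inequality $a^{-1/2}\le R \le (a-1)^{-1/2}$ with $a = \rho/2$) may be multiplied against the \emph{positive} factor without reversing the inequality. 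Combining the three bounds collapses the target to the polynomial comparison $(3\rho-2)(\rho-2) < 4(\rho-1)^2$, i.e. $3\rho^2 - 8\rho + 4 < 4\rho^2 - 8\rho + 4$, which holds for every $\rho > 0$; this routine final check completes the argument.
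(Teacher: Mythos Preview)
Your argument is correct. The reduction to $V(\rho)=\rho/(\rho-2)-(\rho/2)R(\rho)^2$ and the derivative computation leading to the target inequality $R(\rho)^2(\rho\Delta-1)<4/(\rho-2)^2$ are right; the integral representation $\Delta=\int_0^\infty e^{-(\rho-1)t/2}/(1+e^{-t/2})\,dt$ together with the tangent-line bound $1/(1+e^{-t/2})\le \tfrac12+\tfrac{t}{8}$ (valid by concavity of the logistic on $t>0$) does give $\Delta\le (2\rho-1)/(2(\rho-1)^2)$, and Gautschi with $x=(\rho-2)/2$, $s=1/2$ gives $R(\rho)^2<2/(\rho-2)$. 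The final polynomial check $(3\rho-2)(\rho-2)<4(\rho-1)^2$ is immediate. One cosmetic slip: $E[\psi]$ is finite already for $\rho>1$, not only $\rho>2$; it is $E[\psi^2]$ (hence the variance) that needs $\rho>2$, and your chain of bounds is valid on that whole range.

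The paper follows the same overall strategy---differentiate the closed form, control the digamma difference, and close with Gautschi's inequality---but packages it differently. It writes the squared mean through the Beta function and then invokes the Legendre duplication formula twice to return to the ratio $\Gamma((\rho-1)/2)/\Gamma(\rho/2)$; for the digamma step it uses the standard two-sided asymptotic bounds $\ln x-\tfrac{1}{2x}>\Psi(x)>\ln(x+\tfrac12)-\tfrac{1}{x}$ rather than your integral/concavity argument, and its endgame reduces directly to $(\rho-2)R(\rho)^2\le 2$, which is exactly Gautschi. Your route avoids the Beta/duplication detour and derives the needed second-order digamma bound from first principles, which is a little cleaner and more self-contained; the paper's version leans on off-the-shelf digamma inequalities. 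Substantively the two proofs are the same idea executed with different bookkeeping.
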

\vspace{-0.5cm}
\begin{proof}
% \vspace{-0.3in}
For $\forall\, \rho > 4$, we have:
\begin{align}
\Var[\psi(.)] =
% \Var[\sqrt{x}] = 
% \E[x] - \E[\sqrt{x}]^2 
\E[\psi^2(.)] - \E[\psi(.)]^2 
= \tau^2 (\frac{\rho}{\rho-2} - \frac{\rho \,2^{2\rho - 5}}{\pi}\gB(\frac{\rho-1}{2}, \frac{\rho-1}{2})^2),
% \; \forall\, \rho > 4
\label{eqn:analytic-sqrt-var}
% \vspace{-0.5in}
\end{align}
where $\gB(.)$ is the beta function. 
By analyzing the derivative of $\Var[\psi(.)]$, we know it monotonically decreases as $\rho$ increases. 
The detailed derivation is elaborated in the Appendix~\ref{app:proof_mono}. 
\end{proof}
\vspace{-0.3cm}

Theorem~\ref{theorem: variance_mono} gives a qualitative analysis of the variance of the adaptive learning rate.
It shows that, due to the lack of used training samples in the early stage, $\Var[\psi(.)]$ is larger than the late stage (Figure~\ref{fig:var_approx}).
To rigorously constraint the variance, we perform a quantified analysis on $\Var[\psi(.)]$ by estimating the degree of freedoms $\rho$. 

\vspace{-0.1cm}
\section{Rectified Adaptive Learning Rate}
\vspace{-0.1cm}
\label{sec:fix}

In the previous section, Equation~\ref{eqn:analytic-sqrt-var} gives the analytic form of $\Var[\psi(.)]$, where $\rho$ is the degree of freedoms. 
Here, we first give an estimation of $\rho$ based on $t$ to conduct a quantified analysis for $\Var[\psi(g_1, \cdots, g_t)]$, then we describe the design of the learning rate rectification, and  compare it to the heuristic warmup strategies.  

% \subsection{Estimation of $\rho$ and Variance}
\vspace{-0.1cm}
\subsection{Estimation of $\rho$}
\vspace{-0.1cm}

The exponential moving average (EMA) can be interpreted as an approximation to the simple moving average (SMA) in real application~\citep{nau2014forecasting}, \ie, 
\begin{align}
\vspace{-0.2in}
p\left(\frac{(1 - \beta_2)\sum_{i = 1}^t \beta_2^{t-i}g_i^2}{1-\beta_2^t} \right) \approx p\left(\frac{\sum_{i = 1}^{f(t, \beta_2)} g_{t+1-i}^2}{f(t, \beta_2)}\right).
\label{eqn:sma_ema}
\vspace{-0.2in}
\end{align}
where $f(t, \beta_2)$ is the length of the SMA which allows the SMA to have the same ``center of mass'' with the EMA.
In other words, $f(t, \beta_2)$ satisfies:
\begin{align}
\vspace{-0.7in}
% {(1 - \beta_2)}/{(1-\beta_2^t)} \sum_{i = 1}^t i \cdot \beta_2^{t-i} = 1 /{f(t, \beta_2)}\cdot\sum_{i = 1}^{f(t, \beta_2)} (t+1-i).
\frac{(1 - \beta_2)\sum_{i = 1}^t \beta_2^{t-i} \cdot i }{1-\beta_2^t} = \frac{\sum_{i = 1}^{f(t, \beta_2)} (t+1-i)}{f(t, \beta_2)}.
\label{eqn:sma_eql_ema}
\vspace{-0.7in}
\end{align}
% By solving the above equation, we have: 
By solving Equation~\ref{eqn:sma_eql_ema}, we have:
$f(t, \beta_2) = \frac{2}{1 - \beta_2} - 1 - \frac{2 t \beta_2^t}{1 - \beta_2^t}$.
In the previous section, we assume: $\frac{1-\beta_2^t}{(1 - \beta_2)\sum_{i = 1}^t \beta_2^{t-i}g_i^2} \sim \sics(\rho, \frac{1}{\sigma^2})$. 
Here, since $g_i \sim \gN(0, \sigma^2)$, we have $\frac{\sum_{i = 1}^{f(t, \beta_2)} g_{t+1-i}^2}{f(t, \beta_2)} \sim  \sics(f(t, \beta_2), \frac{1}{\sigma^2})$. 
Thus, Equation~\ref{eqn:sma_ema} views $\sics(f(t, \beta_2), \frac{1}{\sigma^2})$ as an approximation to $\sics(\rho, \frac{1}{\sigma^2})$.
Therefore, we treat $f(t, \beta_2)$ as an estimation of $\rho$. 
For ease of notation, we mark $f(t, \beta_2)$ as $\rho_t$. Also, we refer $\frac{2}{1 - \beta_2} - 1$ as $\rho_\infty$ (maximum length of the approximated SMA), due to the inequality $f(t, \beta_2) \leq \lim_{t \to \infty} f(t, \beta_2) = \frac{2}{1 - \beta_2} - 1$.

\vspace{-0.2cm}
\begin{algorithm}[t]
\DontPrintSemicolon
\KwIn{$\{\alpha_t\}_{t = 1}^T$: step size, 
$\{\beta_1, \beta_2\}$: decay rate to calculate moving average and moving 2nd moment,
$\theta_0$: initial parameter, $f_t(\theta)$: stochastic objective function.}
\KwOut{$\theta_t$: resulting parameters}
$m_0, v_0 \gets 0, 0$ (Initialize moving 1st and 2nd moment)\;
$\rho_\infty \gets 2/(1 - \beta_2) - 1$ (Compute the maximum length of the approximated SMA)\;
\While{$t = \{1, \cdots, T\}$}{
    $g_t \gets \nabla_{\theta} f_t(\theta_{t - 1})$ (Calculate gradients w.r.t. stochastic objective at timestep t)\;
    $v_t \gets \beta_2 v_{t - 1} + (1 - \beta_2) g_t^2$ (Update exponential moving 2nd moment)\;
    $m_t \gets \beta_1 m_{t - 1} + (1 - \beta_1) g_t$ (Update exponential moving 1st moment)\;
    $\widehat{m_t} \gets m_t / (1 - \beta_1^t)$ (Compute bias-corrected moving average)\;
    $\rho_t \gets \rho_\infty - 2t\beta_2^t/(1 - \beta_2^t)$(Compute the length of the approximated SMA)\;
    \uIf{the variance is tractable, \ie, $\rho_t > 4$}{
        $l_t \gets \sqrt{(1 - \beta_2^{t}) / v_t}$ (Compute adaptive learning rate)\;
        $r_t \gets \sqrt{\frac{(\rho_t - 4)(\rho_t - 2)\rho_\infty}{(\rho_\infty - 4)(\rho_\infty - 2)\rho_t}}$ (Compute the variance rectification term)\;
        $\theta_t \gets \theta_{t-1} - \alpha_t r_t \widehat{m_t} l_t$ (Update parameters with adaptive momentum)\;
    }
    \Else{
        $\theta_t \gets \theta_{t-1} - \alpha_t \widehat{m_t}$ (Update parameters with un-adapted momentum)\;
    }
}
\Return{$\theta_T$}
\caption{Rectified Adam. All operations are element-wise. }
% \vspace{-0.3cm}
\label{algo:cadam}
\end{algorithm}

\subsection{Variance Estimation and Rectification}
\vspace{-0.1cm}
Based on previous estimations, we have $\Var[\psi(.)] = \tau^2 (\frac{\rho_t}{\rho_t-2} - \frac{\rho_t \,2^{2\rho_t - 5}}{\pi}\gB(\frac{\rho_t-1}{2}, \frac{\rho_t-1}{2})^2)$.
The value of this function in the early stage is significantly larger than the late stage (as analyzed later, it decays roughly at the speed of $O(\frac{1}{\rho_t
})$). 
For example, 
% as in Figure~\ref{fig:var_approx}, 
the variance at $\rho_t = 5$ is over $100$ times larger than the variance at $\rho_t = 500$.
% \XD{Introducing Figure 7 requires explanation. How about just say we will analyze it in section 5.3.}
Additionally, based on Theorem~\ref{theorem: variance_mono}, we know $\min_{\rho_t} \Var[\psi(.)] = \Var[\psi(.)]|_{\rho_t = \rho_\infty}$ and mark this minimal value as $C_{\mbox{var}}$.
In order to ensure that the adaptive learning rate ($\psi(.)$) has consistent variance, we rectify the variance at the $t$-th timestamp as below,
% (with $\rho_t$), \ie,
\begin{align*}
% \vspace{-0.1in}
\Var[r_t \,\psi(g_1, \cdots, g_t)] = C_{\mbox{var}}
\quad
\mbox{where}
\quad
r_t = \sqrt{{C_{\mbox{var}}}/{\Var[\psi(g_1, \cdots, g_t)]}}.
% \label{eqn:analytic-rectification}
% \vspace{-0.1in}
\end{align*}
% \= \sqrt{\frac{(\rho_t - 4)(\rho_t - 2)\rho_\infty}{(\rho_\infty - 4)(\rho_\infty - 2)\rho_t}}.
Although we have the analytic form of $\Var[\psi(.)]$ (\ie, Equation~\ref{eqn:analytic-sqrt-var}), it is not numerically stable. 
Therefore, we use the first-order approximation to calculate the rectification term.
Specifically, by approximating $\sqrt{\psi^2(.)}$ to the first order~\citep{wolter2007taylor}, 
% Following~\citet{wolter2007taylor}, we have:
\begin{align*}
% \vspace{-0.2in}
\sqrt{\psi^2(.)} \approx \sqrt{\E[\psi^2(.)]} + \frac{1}{2\sqrt{\E[\psi^2(.)]}}(\psi^2(.) - \E[\psi^2(.)])
\quad\mbox{and}\quad
\Var[\psi(.)] \approx \frac{\Var[\psi^2(.)]}{4\E[\psi^2(.)]}.
% \label{eqn:sqrt-var}
% \vspace{-0.05in}
\end{align*}
Since $\psi^2(.) \sim \sics(\rho_t, \frac{1}{\sigma^2})$, we have:
% \vspace{-0.05in}
\begin{align}
% \vspace{-0.3in}
\Var[\psi(.)] \approx {\rho_t}/[{2(\rho_t - 2) (\rho_t - 4) \sigma^2}].
\label{eqn:variance_appro}
% \vspace{-0.1in}
\end{align}
% $
% \Var[\psi(.)] \approx {\rho_t}/[{2(\rho_t - 2) (\rho_t - 4) \sigma^2}].
% $
In Section~\ref{subsec:approx}, we conduct simulation experiments to examine Equation~\ref{eqn:variance_appro} and find that it is a reliable approximation. 
Based on Equation~\ref{eqn:variance_appro}, we know that $\Var[\sqrt{\psi(.)}]$ decreases approximately at the speed of $O(\frac{1}{\rho_t})$.
With this approximation, we can calculate the rectification term as:
% \vspace{-0.05in}
\begin{align*}
% \vspace{-0.3in}
r_t = \sqrt{\frac{(\rho_t - 4)(\rho_t - 2)\rho_\infty}{(\rho_\infty - 4)(\rho_\infty - 2)\rho_t}}.
% \vspace{-0.3in}
\end{align*}
% \vspace{-0.2in}
Applying our rectification term to Adam, we come up with a new variant of Adam, Rectified Adam (RAdam), as summarized in Algorithm~\ref{algo:cadam}. 
Specifically, when the length of the approximated SMA is less or equal than 4, the variance of the adaptive learning rate is intractable and the adaptive learning rate is inactivated. 
Otherwise, we calculate the variance rectification term and update parameters with the adaptive learning rate.
It is worth mentioning that, if $\beta_2 \leq 0.6$, we have $\rho_\infty \leq 4$ and RAdam is degenerated to SGD with momentum. 

\begin{figure}[t]
\begin{minipage}{0.69\textwidth}
% \vspace{-0.5cm}
\includegraphics[width=\textwidth]{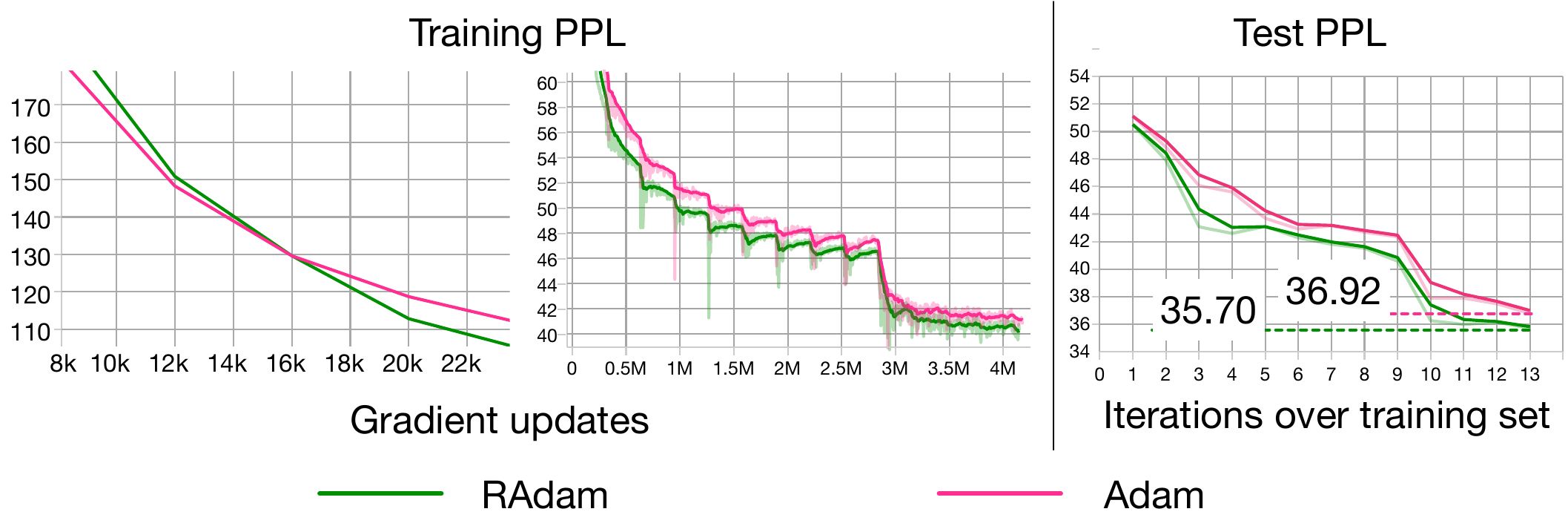}
\vspace{-0.7cm}
 \captionof{figure}{Language modeling (LSTMs) on the One Billion Word.}
    \label{fig:1bw}
% \vspace{-0.4cm}
\end{minipage}
\begin{minipage}{0.3\textwidth}
    \centering
    % \begin{tabular}{c|c|c}
    %      Method &  Cifar10 & ImageNet\\
    %      \hline
    %      SGD   & 91.51 & 69.86 \\
    %      Adam   & 90.54 & 66.54 \\
    %      RAdam   & 91.38 & 67.62 \\
    % \end{tabular}
    \captionof{table}{Image Classification}
    \vspace{-0.2cm}
    \begin{tabular}{c|c|c}
        %  Method &  Cifar10 & ImageNet\\
        %  \hline
        & Method & Acc.\\
        \hline
        & & \\[-7pt]
      \parbox[t]{2mm}{\multirow{3}{*}{\rotatebox[origin=c]{90}{\small \;CIFAR10}}}
      & SGD & 91.51 \\[2pt]
        & Adam   & 90.54 \\[2pt]
        & RAdam   & 91.38 \\[2pt]
        \hline
        & & \\[-7pt]
        \parbox[t]{2mm}{\multirow{3}{*}{\rotatebox[origin=c]{90}{\small \;ImageNet}}}
        & SGD & 69.86 \\[2pt]
        & Adam & 66.54 \\[2pt]
        & RAdam & 67.62 
    \end{tabular}
% \end{table}
\end{minipage}
\end{figure}
\vspace{-0.4cm}
\begin{figure}[t]
\centering
\vspace{-0.2cm}
\includegraphics[width=\textwidth]{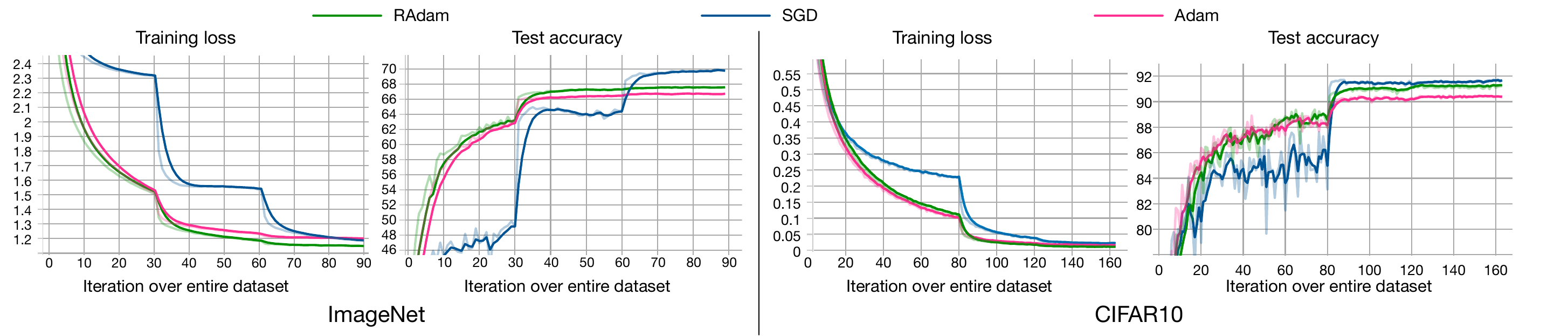}
\vspace{-0.6cm}
\caption{Training of ResNet-18 on the ImageNet and ResNet-20 on the CIFAR10 dataset.}
\vspace{-0.2cm}
    \label{fig:cifa10}
\end{figure}

\subsection{In Comparison with Warmup and Other Stabilization Techniques}
Different from the analysis in this paper,  warmup is originally proposed to handle training with very large batches for SGD~\citep{goyal2017accurate,gotmare2018a,bernstein2018signsgd,Xiao2017DSCOVRRP}.
We notice that $r_t$ has a similar form to the heuristic linear warmup, which can be viewed as setting the rectification term as $\frac{min(t, T_w)}{T_w}$. 
It verifies our intuition that warmup works as a variance reduction technique. 
% Comparing these two strategies, 
RAdam deactivates the adaptive learning rate when its variance is divergent, thus avoiding undesired instability in the first few updates. 
Besides, our method does not require an additional hyperparameter (\ie, $T_w$) 
% to control the variance reduction 
and can automatically adapt to different moving average rules. 

% In this paper, we identify and fix an underlying issue of adaptive optimization methods instead of neural architectures.
% In this paper
Here, we identify and address an underlying issue of adaptive optimization methods independent of (neural) model architectures.
Thus, the proposed rectification term is orthogonal to other training stabilization techniques such as  gradient clipping~\citep{bengio2013advances}, smoothing the adaptive learning rate (\ie, increasing $\epsilon$, applying geometric mean filter~\citep{chen2018closing}, or adding range constraints~\citep{luo2019adaptive}), initialization~\citep{balduzzi2017shattered,zhang2019fixup} and normalization~\citep{ba2016layer,ioffe2015batch}. 
% Indeed, these techniques can be integrated with our proposed variance rectification. 
Indeed, these techniques can be combined with the proposed variance rectification method.

\vspace{-0.1cm}
\section{Experiments}
\vspace{-0.1cm}

%In this section, we conduct experiments to verify our intuition and examine RAdam, while detailed hyperparameter settings are elaborated in the Appendix.
We evaluate RAdam on several benchmarks:
% due to the space limitation.}: 
One Billion Word for language modeling; Cifar10 and ImageNet for image classification; IWSLT'14 De-En/EN-DE and WMT'16 EN-De for neural machine translation.
% ; Pong for Reinforcement Learning. 
Following~\cite{loshchilov2017fixing}, we decouple weight decays in the vanilla Adam, Adam with warmup and RAdam 
% employs the fixed weight decay 
in our experiments.  Details are in Appendix~\ref{app:implement}.
%It is worth noticing that, the vanilla Adam used in our experiments employs the fixed weight decay~\citep{loshchilov2017fixing}. 

\vspace{-0.1cm}
\subsection{Comparing to Vanilla Adam}
\vspace{-0.1cm}
As analyzed before, the adaptive learning rate has undesirably large variance in the early stage of training and leads to suspicious/bad local optima on NMT. 
% One question we are interested in answering is: whether such an issue widely exits in other similar tasks and applications.
One question we are interested in is: whether such an issue widely exits in other similar tasks and applications.
%Here, we first verify whether such an issue widely exists by comparing the vanilla Adam with RAdam. 
%Specifically, we conduct experiments on 
Thus, we conduct a set of experiments with two classical tasks of NLP and CV, \ie, language modeling and image classification.
% are used in this study.
RAdam not only results in consistent improvements over the vanilla Adam, but also demonstrates its robustness to the change of learning rates. 
It verifies that the variance issue exists in various machine learning applications, and has a big impact on the model behavior. 
% Detailed comparison and analysis are described as follows. 

% \begin{figure}[t]
% \centering
% \vspace{-0.2cm}
% \includegraphics[width=\textwidth]{fig/cifa_imagenet.pdf}
% \vspace{-0.6cm}
% \caption{Training of ResNet-18 on the ImageNet and ResNet-20 on the CIFAR10 dataset.}
% \vspace{-0.4cm}
%     \label{fig:cifa10}
% \end{figure}

% \begin{wraptable}{r}{0.3\textwidth}
%     \centering
% \vspace{-0.3cm}
%     \caption{Perplexity on Language Modeling}
% \vspace{-0.2cm}
%     \label{tab:lm_ppl}
%     \begin{tabular}{c|c}
%          Method &  One Billion Word\\
%          \hline
%          Adam   & 36.92 \\
%          RAdam   &  35.70 \\
%     \end{tabular}
% \vspace{-0.5cm}
% \end{wraptable}

\noindent
\textbf{Performance Comparison.}
The performances on language modeling (\ie, One Billion Word~
%\footnote{Rare words that occur less than 3 times are replaced with a special token, the resulting dictionary is shrank from 7.9M to 6.4M.}
\citep{Chelba2013OneBW}) and image classification (\ie, CIFAR10~\citep{krizhevsky2009learning} and ImageNet~\citep{deng2009imagenet}) are 
%summarized in Table~\ref{tab:lm_ppl} and Table~\ref{tab:image_cls}, and their learning curves are
presented in Figure~\ref{fig:1bw}, \ref{fig:cifa10}. %and Table~\ref{tab:image_cls}.%, respectively. 
The results show that RAdam outperforms Adam in all three datasets.
As shown in Figure~\ref{fig:1bw}, although the rectification term makes RAdam slower than the vanilla Adam in the first few epochs, it allows RAdam to converge faster after that. %have a faster speed after that. 
In other words, by reducing the variance of the adaptive learning rate in the early stage, it gets both faster convergence and better performance, which verifies the impact of the variance issue. 
%At the same time, RAdam demonstrates consistent improvements over Adam on image classification. 
We also observe that RAdam obtains consistent improvements over Adam on image classification.
It is worth noting that, on both ImageNet and CIFAR10, although RAdam fails to outperform SGD in terms of test accuracy, it results in a better training performance (\eg, the training accuracy of SGD, Adam, and RAdam on ImageNet are $69.57$, $69.12$ and $70.30$ respectively). 
%[Jianfeng: be careful to draw such a conclusion here. this might be a sign of overfitting, or worse generalization capability.]

% \begin{wraptable}{r}{0.35\textwidth}
%     \centering
% \vspace{-0.3cm}
%     \caption{Image Classification}
% \vspace{-0.3cm}
%     \label{tab:image_cls}
%     \begin{tabular}{c|c|c}
%         \multirow{2}{*}{Method}        & \multicolumn{2}{c}{Accuracy} \\
%         \cline{2-3}
%                 &  CIFAR10 & ImageNet\\
%          \hline
%          SGD   & 91.51 & 69.86 \\
%          Adam   & 90.54 & 66.54 \\
%          RAdam   & 91.38 & 67.62 
%     \end{tabular}
% % \vspace{-0.3cm}
% \end{wraptable}

% \begin{figure}[h]
% \centering
% \vspace{-0.2cm}
% \includegraphics[width=0.8\textwidth]{fig/one_billion.pdf}
% \caption{Training of LSTMs on the One Billion Word dataset.}
%     \label{fig:1bw}
% \end{figure}

\noindent
\textbf{Robustness to Learning Rate Change.}
Besides performance improvements, RAdam also improves the robustness of model training. 
%Specifically,
We use different initial learning rates, conduct experiments with ResNet-20 on the CIFAR10 datasets, and summarize their performance in Figure~\ref{fig:cifa10_lr}. 
For learning rates within a broad range (\ie, $\{0.1, 0.03, 0.01, 0.003\}$), RAdam achieves consistent model performances (their test accuracy curves highly overlap with each other), while Adam and SGD are shown to be more sensitive to the learning rate. 
The observation can be interpreted that by rectifying the variance of the adaptive learning rate, RAdam improves the robustness of model training and can adapt to different learning rates of a broader range. 

\begin{figure}[t]
\centering
\vspace{0.2cm}
    \includegraphics[width=0.94\textwidth]{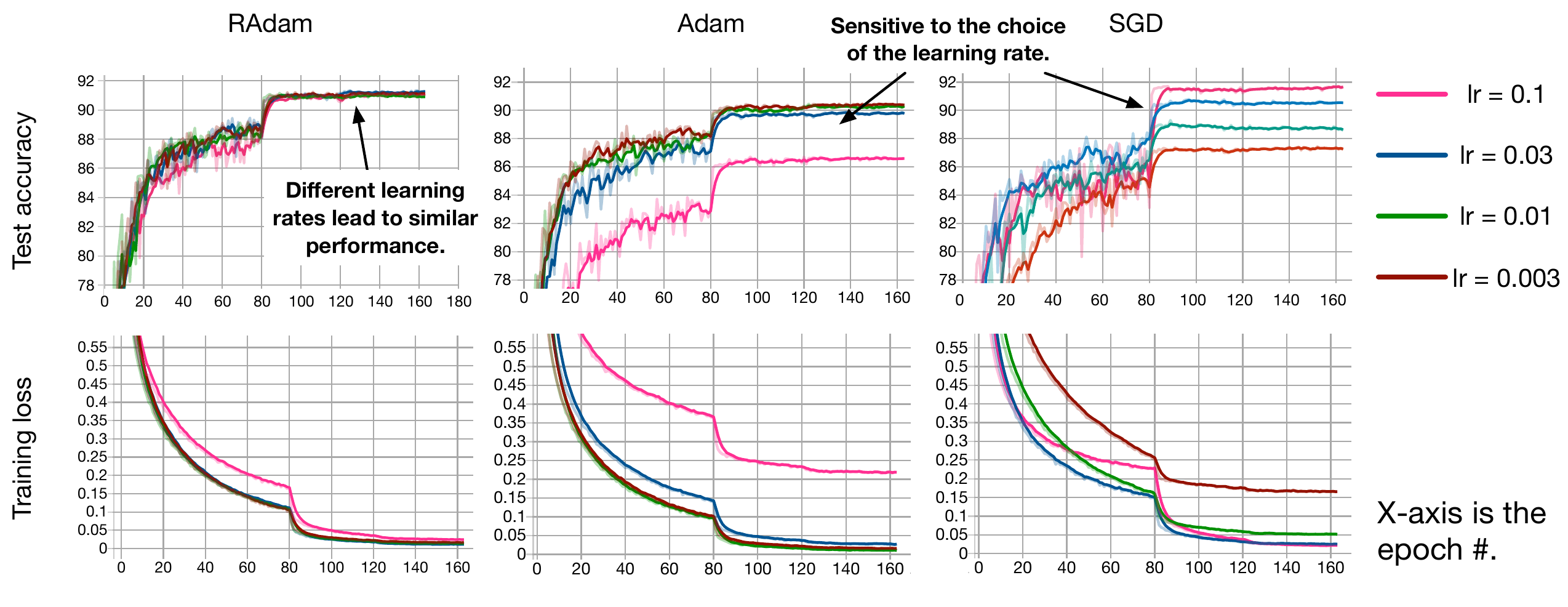}
    \vspace{-0.5cm}
\caption{Performance of RAdam, Adam and SGD with different learning rates on CIFAR10.}
% X-axis is the number of epochs. }
% The upper row is the test accuracy and the lower row is the training loss. }
% 
% \vspace{-0.05cm}
\label{fig:cifa10_lr}
\end{figure}

\begin{figure}[t]
\vspace{-0.2cm}
\centering
    \includegraphics[width=\textwidth]{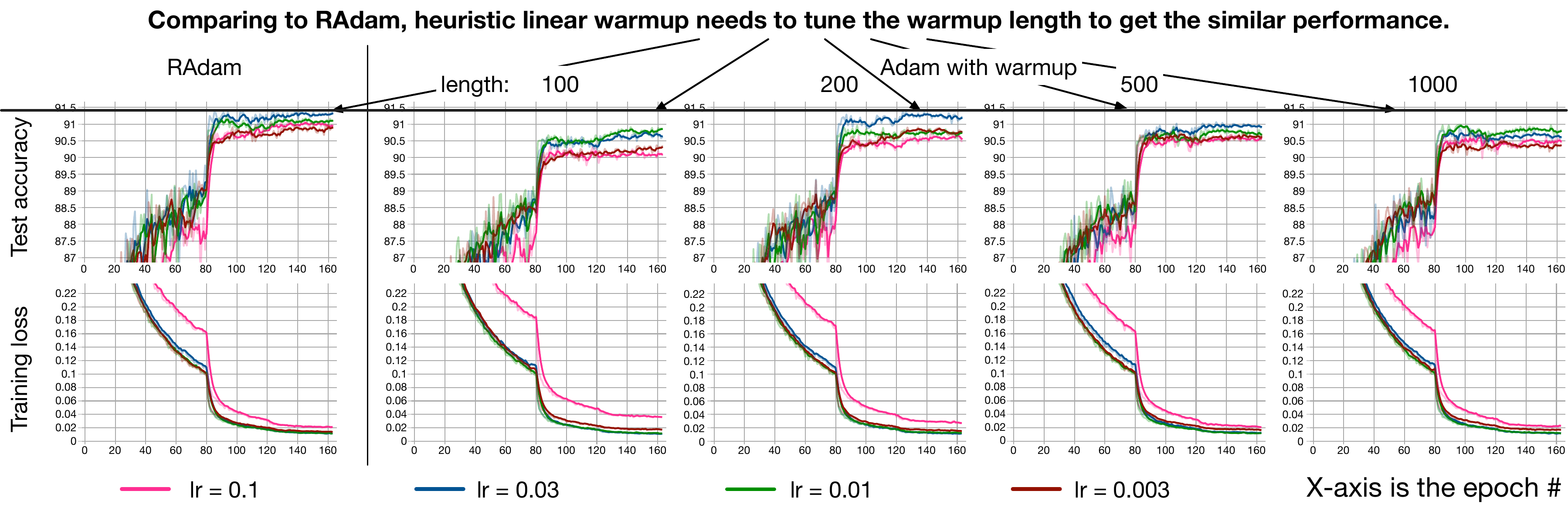}
\vspace{-0.7cm}
\caption{Performance of RAdam, Adam with warmup on CIFAR10 with different learning rates.}
% X-axis is the number of epochs. }
% The upper row is the test accuracy and the lower row is the training loss. }
\vspace{-0.5cm}
\label{fig:cifa10_wu}
\end{figure}

\vspace{-0.1cm}
\subsection{Comparing to Heuristic Warmup}
\vspace{-0.1cm}
To examine the effectiveness of RAdam, we first conduct comparisons on neural machine translation, on which the state-of-the-art employs Adam with the linear warmup.
Specifically, we conduct experiments on three datasets, i.e., IWSLT'14 De-En, IWSLT'14 En-De, and WMT'16 En-De. 
Due to the limited size of the IWSLT'14 dataset, we conduct experiments using 5 different random seeds and report their mean and standard derivation. 
As discussed before, the vanilla Adam algorithm leads to suspicious/bad local optima (i.e., converges to a training perplexity around 500), and needs a learning rate warmup stage to stabilize the training. 

We summarize the performance obtained with the heuristic warmup and our proposed rectification term in Table~\ref{tab:nmt} and visualize the training curve of IWSLT De-En in Figure~\ref{fig:eps2k}.
With a consistent adaptive learning rate variance, our proposed method achieves similar performance to that of previous state-of-the-art warmup heuristics. 
It verifies our intuition that the problematic updates of Adam are indeed caused by the undesirably large variance in the early stage. 

\begin{table}[t]
    \centering
\vspace{-0.2cm}
    \caption{BLEU score on Neural Machine Translation. 
    }
\vspace{-0.2cm}
    \label{tab:nmt}
    \begin{tabular}{c|c|c|c}
         Method & IWSLT'14 DE-EN & IWSLT'14 EN-DE & WMT'16 EN-DE \\
         \hline
         Adam with warmup  & $34.66 \pm 0.014$ & $28.56 \pm 0.067$ & $27.03$\\
         RAdam   & $34.76 \pm 0.003$ & $28.48 \pm 0.054$ & $27.27$\\
    \end{tabular}
\vspace{-0.6cm}
\end{table}

Moreover, we applied Adam with warmup on the CIFAR10 dataset. 
Its best accuracy on the test set is $91.29$, which is similar to RAdam ($91.38$). 
However, we found that RAdam requires less hyperparameter tuning. 
Specifically, we visualize their learning curves in Figure~\ref{fig:cifa10_wu}.
For some warmup steps, Adam with warmup is relatively more sensitive to the choice of the learning rate. 
RAdam, at the same time, is not only more robust, but also can automatically control the warmup behavior (\ie, without requiring the length of warmup). 
For example, when setting the learning rate as $0.1$, Adam with 100 steps of warmup fails to get satisfying performance and only results in an accuracy of $90.13$; RAdam successfully gets an accuracy of $91.06$, with the original setting of the moving average calculation (\ie, $\beta_1 = 0.9, \beta_2 = 0.999$). 
We conjecture the reason is due to the fact that RAdam, which is based on a rigorous variance analysis, explicitly avoids the extreme situation where the variance is divergent, and rectifies the variance to be consistent in other situations. 

% !TEX encoding = UTF-8
% !TEX Root = 0_main.tex

% \section{Simulated Verification}
% \vspace{-0.3cm}
\subsection{Simulated Verification}
% \vspace{-0.2cm}

% In this section, we examine the two approximations used in the previous section to analyze the variance.
% Specifically, as in Equation~\ref{eqn:variance_appro_appro}, we approximate $\Var[\sqrt{\frac{t}{\sum_{i=1}^t g_i^2}}]$ to the first order and assume $\psi^2(.) = \frac{1-\beta_2^t}{(1 - \beta_2)\sum_{i = 1}^t \beta_2^{t-i}g_i^2}$ subjects to the scaled inverse chi-square distribution. 
% In our analysis 
In Sections~\ref{sec:ana} and \ref{sec:fix}, we approximate $\Var[\sqrt{t/\sum_{i=1}^t g_i^2}]$ to the first order, and assume $\psi^2(.) = \frac{1-\beta_2^t}{(1 - \beta_2)\sum_{i = 1}^t \beta_2^{t-i}g_i^2}$ subjects to a scaled inverse chi-square distribution (this assumption covers the approximation from EMA to SMA).
Here, we examine these two approximations using simulations.

\noindent\textbf{First Order Approximation of $\Var[\sqrt{t/\sum_{i=1}^t g_i^2}]$.}
\label{subsec:approx}
% Here, we first derive the analytic form of $\Var[\sqrt{\frac{t}{\sum_{i=1}^t g_i^2}}]$ then compare its value to the first order approximation in Equation~\ref{eqn:variance_appro}.
% For ease of notation, we refer $\frac{t}{\sum_{i=1}^t g_i^2}$ as $x$, and mark $\frac{1}{\sigma^2}$ as $\tau^2$.
% Therefore, $x \sim \sics(t, \tau^2)$ and $p(x) = \frac{(\tau^2t/2)^{t/2}}{\Gamma(t/2)}\frac{\exp[\frac{-v\tau^2}{2x}]}{x^{1 + t/2}}$.
% We have: 
% \begin{equation}
%     \E[\sqrt{x}] = \int_{0}^{\infty} \sqrt{x} p(x) dx = \frac{\tau \sqrt{t} \,\Gamma(t/2 - 1)}{\sqrt{2} \,\Gamma(t/2)}.
%     \label{eqn:expect-sqrt-x}
% \end{equation}
% Based on Equation~\ref{eqn:inv-chi-sq} and \ref{eqn:expect-sqrt-x}, we have:
% \begin{equation}
% \Var[\sqrt{x}] = \E[x] - \E[\sqrt{x}]^2 = \tau^2 (\frac{t}{t-2} - \frac{t \,2^{2t - 5}}{\pi}\gB(\frac{t-1}{2}, \frac{t-1}{2})^2).
% \label{eqn:analytic-sqrt-var}
% \end{equation}
To compare Equations~\ref{eqn:variance_appro} and \ref{eqn:analytic-sqrt-var}, we assume $\tau = 1$ and plot their values and difference for $\nu = \{5, \cdots, 500\}$ in Figure~\ref{fig:var_approx}.
The curve of the analytic form and the first-order approximation highly overlap, and their difference is much smaller than their value.
This result verifies that our first-order approximation is very accurate.% of  $\Var[\sqrt{\frac{t}{\sum_{i=1}^t g_i^2}}]$.
% It is worth mentioning that, although we get the analytic form of $ \Var[\sqrt{\frac{t}{\sum_{i=1}^t g_i^2}}]$, it's preferable to use its first order approximation,
% % as Equation~\ref{eqn:variance_appro}, since the Equation~\ref{eqn:analytic-sqrt-var} is not numerical stable.
% which is more stable and efficient to calculate. 
% % (Mathematica\footnote{Version Number ?} fails to calculate the analytic value for $t = 1000$). 

\noindent\textbf{Scaled Inverse Chi-Square Distribution Assumption.}
\label{subsec:sma-ema}
In this paper, we assume $g_i$ accords to a Normal distribution with a zero mean. 
%Also, based on the similarity between the exponential moving average and simple moving average, 
We also assume $\psi^2(.)$ accords to the scaled inverse chi-square distribution to derive the variance of $\Var[\psi(.)]$,
based on the similarity between the exponential moving average and simple moving average.
Here, we empirically verify this assumption.
% via simulation. 

Specifically, since $g_i$ in the optimization problem may not be zero-mean, we assume its expectation is $\mu$ and sample $g_i$ from $\cN(\mu, 1)$.
Then, based on these samples, we calculate the variance of the original adaptive learning rate and the proposed rectified adaptive learning rate, \ie,  $\Var[\frac{1}{\widehat{v_t}}]$ and $\Var[\frac{r_t}{\widehat{v_t}}]$ respectively.
We set $\beta_2$ to $0.999$, the number of sampled trajectories to $5000$, the number of iterations to $6000$, and summarize the simulation results in Figure~\ref{fig:vt_var_simulation}. 
Across all six settings with different $\mu$, the adaptive learning rate has a larger variance in the first stage and the rectified adaptive learning rate has relative consistent variance. 
This verifies the reliability of our assumption. 
\section{Conclusion}
% \vspace{-0.4cm}
% In this paper, we analysis the underlying principles of warmup on the adaptive learning rate.
In this paper, we explore the underlying principle of the effectiveness of the warmup heuristic used for adaptive optimization algorithms. 
Specifically, we identify that, due to the limited amount of samples in the early stage of model training, the adaptive learning rate has an undesirably large variance and can cause the model to converge to suspicious/bad local optima. 
We provide both empirical and theoretical evidence to support our hypothesis, and further propose a new variant of Adam, whose adaptive learning rate is rectified so as to have a consistent variance. 
Empirical results demonstrate the effectiveness of our proposed method. 
% In future work, we plan to apply the proposed method to other applications such as Named Entity Recognition~\citep{lin2019reliability}.
% Another interesting direction to pursue is to adapt the choice of $\beta$ based on the variance estimation of different parameters, \ie, use a larger $\beta$ for parameters with a larger variance. 
In future work, we plan to replace the rectification strategy by sharing the second moment estimation across similar parameters. 

\vspace{-0.2cm}
\section*{Acknowledge}
\vspace{-0.3cm}
We thank Zeyuan Allen-Zhu for valuable discussions and comments, Microsoft Research Technology Engineering team for setting up GPU machines.
Research was sponsored in part by DARPA No. W911NF-17-C-0099 and FA8750-19-2-1004, National Science Foundation IIS 16-18481, IIS 17-04532, and IIS-17-41317, and DTRA HDTRA11810026. 
% Any opinions, findings, and conclusions or recommendations expressed in this document are those of the authors and should not be interpreted as the views of any U.S. Government. 
% The U.S. Government is authorized to reproduce and distribute reprints for Government purposes notwithstanding any copyright notation hereon.

\begin{figure}[t]
\begin{minipage}{0.38\textwidth}
\begin{figure}[H]
\centering
    \includegraphics[width=\textwidth]{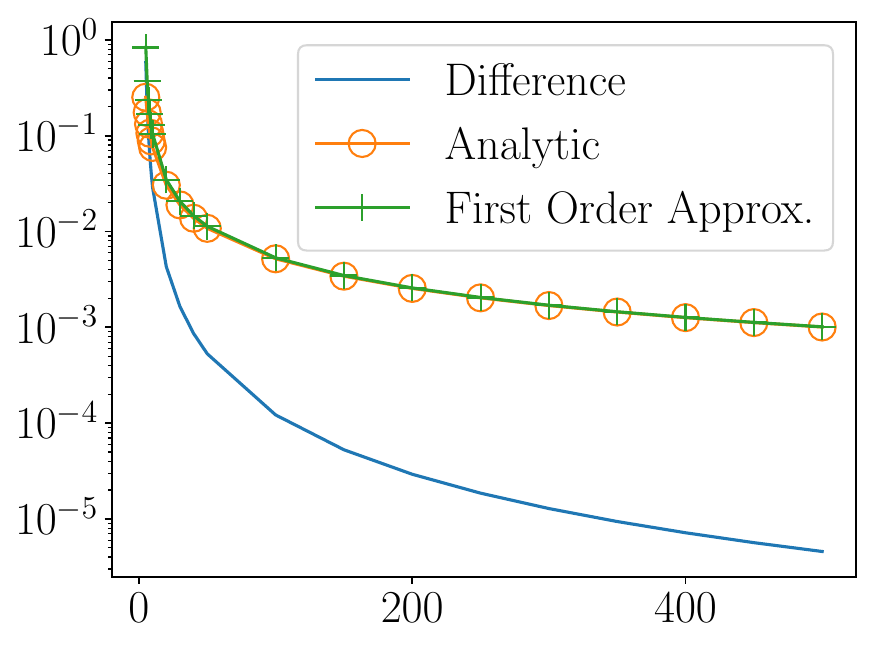}
\vspace{-0.6cm}
\caption{The value of Equation~\ref{eqn:analytic-sqrt-var}, Equation~\ref{eqn:variance_appro} and their difference (absolute difference). The x-axis is $\rho$ and the y-axis is the variance (log scale).}
    \label{fig:var_approx}
\end{figure}
\end{minipage}
\;
\begin{minipage}{0.61\textwidth}
\begin{figure}[H]
\centering
\begin{tabular}[H]{ ccc } 
 \includegraphics[width=0.3\textwidth]{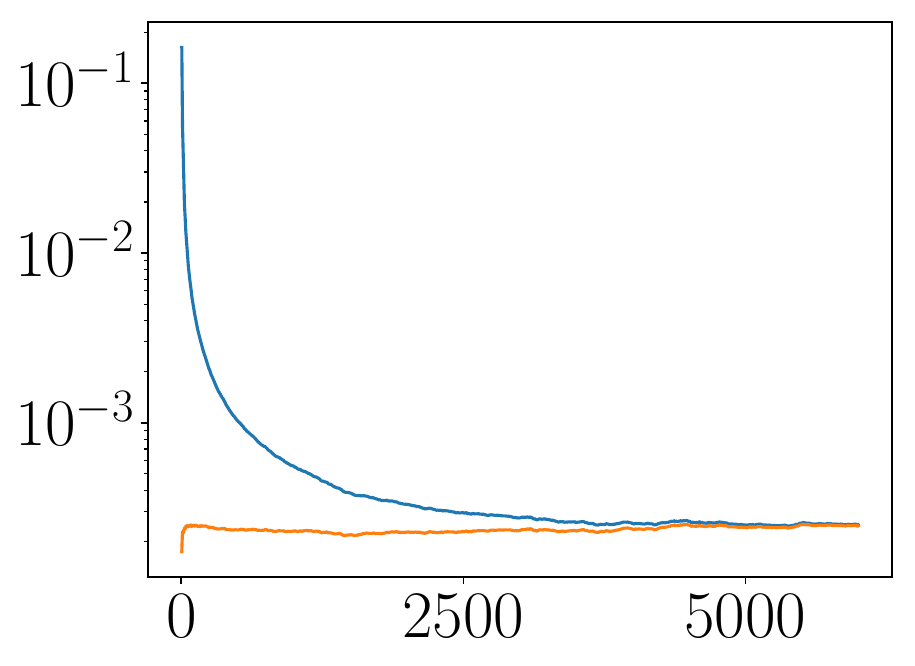} & 
 \includegraphics[width=0.3\textwidth]{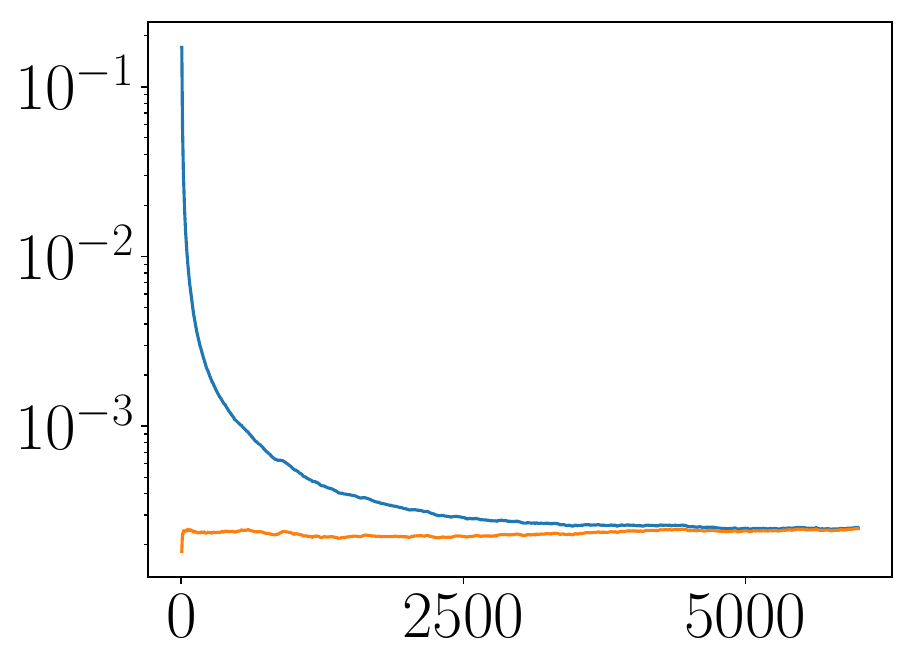} & 
 \includegraphics[width=0.3\textwidth]{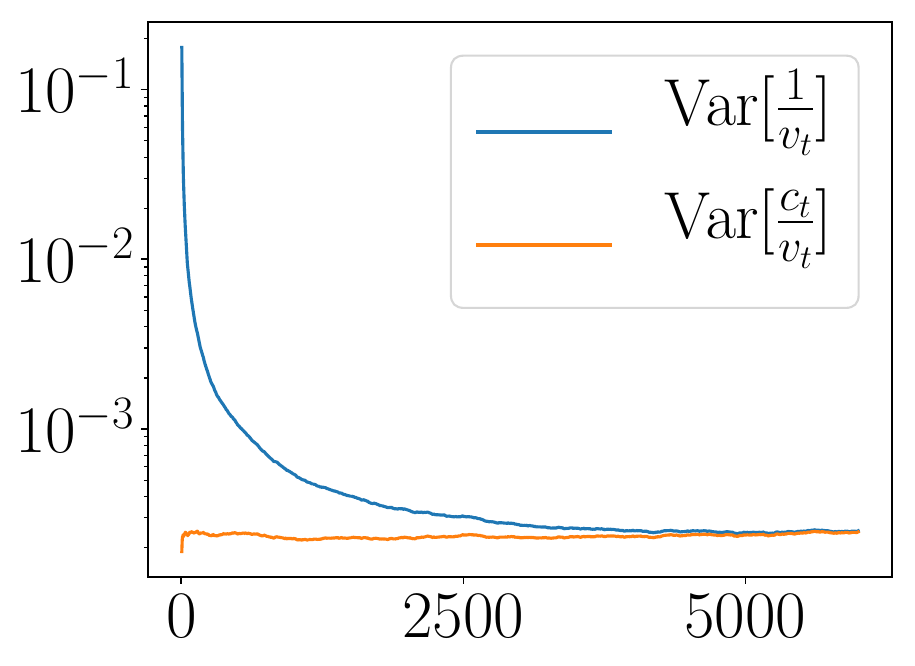}  \\
 $\mu=0$ & $\mu=0.001$  & $\mu=0.01$ \\ 
 \includegraphics[width=0.3\textwidth]{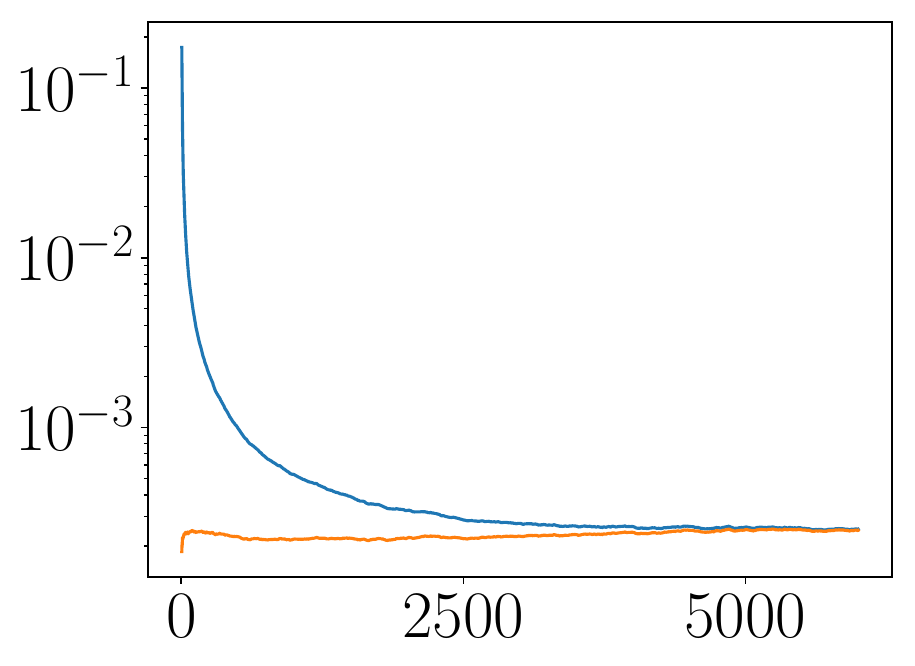} & 
 \includegraphics[width=0.3\textwidth]{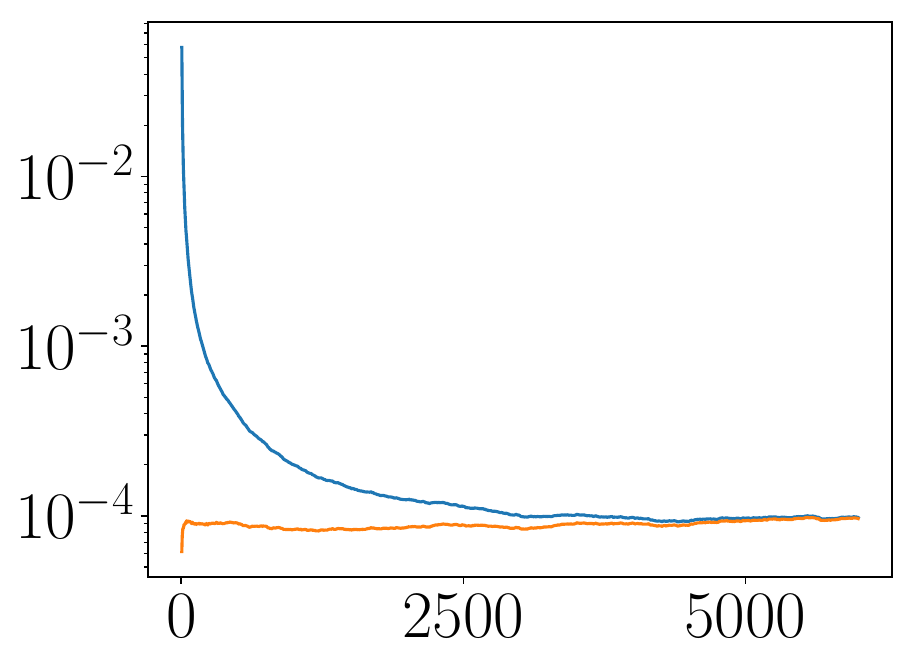}& 
 \includegraphics[width=0.3\textwidth]{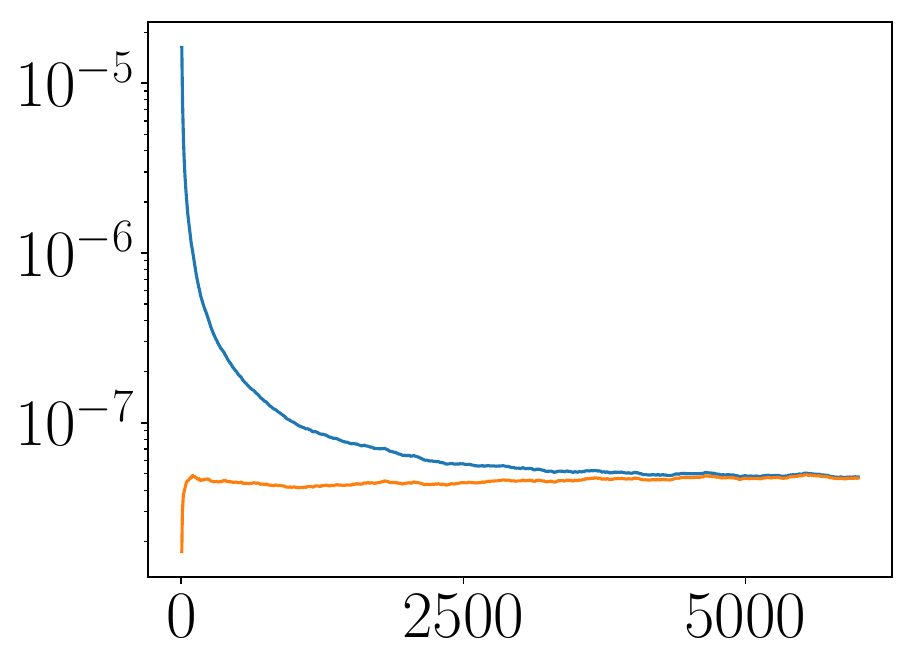} \\
 $\mu=0.1$ & $\mu=1$ & $\mu=10$ \\
\end{tabular}
\vspace{-0.2cm}
% \caption{The simulation of $\Var[\frac{1}{v_t}]$ and $\Var[\frac{c_t}{v_t}]$. The x-axis is iteration number (the simulation starts from 5) and the y-axis is the variance in the log scale.}
\caption{The simulation of $\Var[\frac{1}{v_t}]$ and $\Var[\frac{c_t}{v_t}]$. The x-axis is iteration \# (from 5), the y-axis is the variance (log scale).}
    \label{fig:vt_var_simulation}
\end{figure}
\end{minipage}
\vspace{-0.4cm}
\end{figure}

% \subsubsection*{Acknowledgments}
% We would like to thank Prof. Jiawei Han for his generous help. 

\bibliography{iclr2020_conference}

\begin{thebibliography}{34}
\providecommand{\natexlab}[1]{#1}
\providecommand{\url}[1]{\texttt{#1}}
\expandafter\ifx\csname urlstyle\endcsname\relax
  \providecommand{\doi}[1]{doi: #1}\else
  \providecommand{\doi}{doi: \begingroup \urlstyle{rm}\Url}\fi

\bibitem[Ba et~al.(2016)Ba, Kiros, and Hinton]{ba2016layer}
Jimmy~Lei Ba, Jamie~Ryan Kiros, and Geoffrey~E Hinton.
\newblock Layer normalization.
\newblock \emph{arXiv preprint arXiv:1607.06450}, 2016.

\bibitem[Balduzzi et~al.(2017)Balduzzi, Frean, Leary, Lewis, Ma, and
  McWilliams]{balduzzi2017shattered}
David Balduzzi, Marcus Frean, Lennox Leary, JP~Lewis, Kurt Wan-Duo Ma, and
  Brian McWilliams.
\newblock The shattered gradients problem: If resnets are the answer, then what
  is the question?
\newblock In \emph{ICML}, 2017.

\bibitem[Bengio et~al.(2013)Bengio, Boulanger-Lewandowski, and
  Pascanu]{bengio2013advances}
Yoshua Bengio, Nicolas Boulanger-Lewandowski, and Razvan Pascanu.
\newblock Advances in optimizing recurrent networks.
\newblock In \emph{2013 IEEE International Conference on Acoustics, Speech and
  Signal Processing}, pp.\  8624--8628. IEEE, 2013.

\bibitem[Bernstein et~al.(2018)Bernstein, Wang, Azizzadenesheli, and
  Anandkumar]{bernstein2018signsgd}
Jeremy Bernstein, Yu-Xiang Wang, Kamyar Azizzadenesheli, and Anima Anandkumar.
\newblock signsgd: Compressed optimisation for non-convex problems.
\newblock In \emph{ICML}, 2018.

\bibitem[Cauchy(1847)]{cauchy1847methode}
Augustin Cauchy.
\newblock M{\'e}thode g{\'e}n{\'e}rale pour la r{\'e}solution des systemes
  d’{\'e}quations simultan{\'e}es.
\newblock \emph{Comp. Rend. Sci. Paris}, 25\penalty0 (1847):\penalty0 536--538,
  1847.

\bibitem[Cettolo et~al.(2014)Cettolo, Niehues, St{\"u}ker, Bentivogli, and
  Federico]{cettolo2014report}
Mauro Cettolo, Jan Niehues, Sebastian St{\"u}ker, Luisa Bentivogli, and
  Marcello Federico.
\newblock Report on the 11th iwslt evaluation campaign, iwslt 2014.
\newblock In \emph{Proceedings of the International Workshop on Spoken Language
  Translation,}, 2014.

\bibitem[Chelba et~al.(2013)Chelba, Mikolov, Schuster, Ge, Brants, Koehn, and
  Robinson]{Chelba2013OneBW}
Ciprian Chelba, Tomas Mikolov, Michael Schuster, Qi~Ge, Thorsten Brants,
  Phillipp Koehn, and Tony Robinson.
\newblock One billion word benchmark for measuring progress in statistical
  language modeling.
\newblock In \emph{INTERSPEECH}, 2013.

\bibitem[Chen et~al.(2018)Chen, Zhou, Tang, Yang, and Gu]{chen2018closing}
Jinghui Chen, Dongruo Zhou, Yiqi Tang, Ziyan Yang, and Quanquan Gu.
\newblock Closing the generalization gap of adaptive gradient methods in
  training deep neural networks.
\newblock \emph{arXiv preprint arXiv:1806.06763}, 2018.

\bibitem[Deng et~al.(2009)Deng, Dong, Socher, Li, Li, and
  Fei-Fei]{deng2009imagenet}
Jia Deng, Wei Dong, Richard Socher, Li-Jia Li, Kai Li, and Li~Fei-Fei.
\newblock Imagenet: A large-scale hierarchical image database.
\newblock In \emph{ICML}, 2009.

\bibitem[Devlin et~al.(2019)Devlin, Chang, Lee, and Toutanova]{devlin2018bert}
Jacob Devlin, Ming-Wei Chang, Kenton Lee, and Kristina Toutanova.
\newblock Bert: Pre-training of deep bidirectional transformers for language
  understanding.
\newblock In \emph{NAACL-HLT}, 2019.

\bibitem[Dozat(2016)]{dozat2016incorporating}
Timothy Dozat.
\newblock Incorporating nesterov momentum into adam.
\newblock 2016.

\bibitem[Duchi et~al.(2010)Duchi, Hazan, and Singer]{duchi2011adaptive}
John Duchi, Elad Hazan, and Yoram Singer.
\newblock Adaptive subgradient methods for online learning and stochastic
  optimization.
\newblock In \emph{COLT}, 2010.

\bibitem[Gauss(1823)]{gauss1823theoria}
Carl-Friedrich Gauss.
\newblock Theoria combinationis observationum erroribus minimis obnoxiae.
\newblock \emph{Commentationes Societatis Regiae Scientiarum Gottingensis
  Recentiores}, 1823.

\bibitem[Gotmare et~al.(2019)Gotmare, Keskar, Xiong, and Socher]{gotmare2018a}
Akhilesh Gotmare, Nitish~Shirish Keskar, Caiming Xiong, and Richard Socher.
\newblock A closer look at deep learning heuristics: Learning rate restarts,
  warmup and distillation.
\newblock In \emph{ICLR}, 2019.

\bibitem[Goyal et~al.(2017)Goyal, Doll{\'a}r, Girshick, Noordhuis, Wesolowski,
  Kyrola, Tulloch, Jia, and He]{goyal2017accurate}
Priya Goyal, Piotr Doll{\'a}r, Ross Girshick, Pieter Noordhuis, Lukasz
  Wesolowski, Aapo Kyrola, Andrew Tulloch, Yangqing Jia, and Kaiming He.
\newblock Accurate, large minibatch sgd: Training imagenet in 1 hour.
\newblock \emph{arXiv preprint arXiv:1706.02677}, 2017.

\bibitem[He et~al.(2016)He, Zhang, Ren, and Sun]{he2016deep}
Kaiming He, Xiangyu Zhang, Shaoqing Ren, and Jian Sun.
\newblock Deep residual learning for image recognition.
\newblock In \emph{CVPR}, 2016.

\bibitem[Hinton et~al.(2012)Hinton, Srivastava, and
  Swersky]{tieleman2012lecture}
Geoffrey Hinton, Nitish Srivastava, and Kevin Swersky.
\newblock Neural networks for machine learning lecture 6a overview of
  mini-batch gradient descent.
\newblock \emph{Cited on}, 2012.

\bibitem[Ioffe \& Szegedy(2015)Ioffe and Szegedy]{ioffe2015batch}
Sergey Ioffe and Christian Szegedy.
\newblock Batch normalization: Accelerating deep network training by reducing
  internal covariate shift.
\newblock In \emph{ICML}, 2015.

\bibitem[Kingma \& Ba(2014)Kingma and Ba]{kingma2014adam}
Diederik~P Kingma and Jimmy Ba.
\newblock Adam: A method for stochastic optimization.
\newblock In \emph{ICLR}, 2014.

\bibitem[Krizhevsky et~al.(2009)Krizhevsky, Hinton,
  et~al.]{krizhevsky2009learning}
Alex Krizhevsky, Geoffrey Hinton, et~al.
\newblock Learning multiple layers of features from tiny images.
\newblock Technical report, Citeseer, 2009.

\bibitem[Liu et~al.(2018)Liu, Ren, Shang, Peng, and Han]{liu2018efficient}
Liyuan Liu, Xiang Ren, Jingbo Shang, Jian Peng, and Jiawei Han.
\newblock Efficient contextualized representation: Language model pruning for
  sequence labeling.
\newblock \emph{EMNLP}, 2018.

\bibitem[Liu et~al.(2019)Liu, Ott, Goyal, Du, Joshi, Chen, Levy, Lewis,
  Zettlemoyer, and Stoyanov]{liu2019roberta}
Yinhan Liu, Myle Ott, Naman Goyal, Jingfei Du, Mandar Joshi, Danqi Chen, Omer
  Levy, Mike Lewis, Luke Zettlemoyer, and Veselin Stoyanov.
\newblock Roberta: A robustly optimized bert pretraining approach.
\newblock \emph{arXiv preprint arXiv:1907.11692}, 2019.

\bibitem[Loshchilov \& Hutter(2018)Loshchilov and Hutter]{loshchilov2017fixing}
Ilya Loshchilov and Frank Hutter.
\newblock Fixing weight decay regularization in adam.
\newblock In \emph{ICLR}, 2018.

\bibitem[Luo et~al.(2019)Luo, Xiong, Liu, and Sun]{luo2019adaptive}
Liangchen Luo, Yuanhao Xiong, Yan Liu, and Xu~Sun.
\newblock Adaptive gradient methods with dynamic bound of learning rate.
\newblock In \emph{ICLR}, 2019.

\bibitem[Nau(2014)]{nau2014forecasting}
Robert Nau.
\newblock Forecasting with moving averages.
\newblock 2014.

\bibitem[Ott et~al.(2019)Ott, Edunov, Baevski, Fan, Gross, Ng, Grangier, and
  Auli]{ott2019fairseq}
Myle Ott, Sergey Edunov, Alexei Baevski, Angela Fan, Sam Gross, Nathan Ng,
  David Grangier, and Michael Auli.
\newblock fairseq: A fast, extensible toolkit for sequence modeling.
\newblock In \emph{NAACL}, 2019.

\bibitem[Popel \& Bojar(2018)Popel and Bojar]{popel2018training}
Martin Popel and Ond{\v{r}}ej Bojar.
\newblock Training tips for the transformer model.
\newblock \emph{The Prague Bulletin of Mathematical Linguistics}, 110\penalty0
  (1):\penalty0 43--70, 2018.

\bibitem[Reddi et~al.(2018)Reddi, Kale, and Kumar]{reddi2019convergence}
Sashank~J Reddi, Satyen Kale, and Sanjiv Kumar.
\newblock On the convergence of adam and beyond.
\newblock In \emph{ICLR}, 2018.

\bibitem[Szegedy et~al.(2016)Szegedy, Vanhoucke, Ioffe, Shlens, and
  Wojna]{szegedy2016rethinking}
Christian Szegedy, Vincent Vanhoucke, Sergey Ioffe, Jon Shlens, and Zbigniew
  Wojna.
\newblock Rethinking the inception architecture for computer vision.
\newblock In \emph{CVPR}, 2016.

\bibitem[Vaswani et~al.(2017)Vaswani, Shazeer, Parmar, Uszkoreit, Jones, Gomez,
  Kaiser, and Polosukhin]{vaswani2017attention}
Ashish Vaswani, Noam Shazeer, Niki Parmar, Jakob Uszkoreit, Llion Jones,
  Aidan~N Gomez, {\L}ukasz Kaiser, and Illia Polosukhin.
\newblock Attention is all you need.
\newblock In \emph{NIPS}, 2017.

\bibitem[Wolter(2007)]{wolter2007taylor}
Kirk~M Wolter.
\newblock Taylor series methods.
\newblock In \emph{Introduction to variance estimation}. 2007.

\bibitem[Xiao et~al.(2017)Xiao, Yu, Lin, and Chen]{Xiao2017DSCOVRRP}
Lin Xiao, Adams~Wei Yu, Qihang Lin, and Weizhu Chen.
\newblock Dscovr: Randomized primal-dual block coordinate algorithms for
  asynchronous distributed optimization.
\newblock \emph{J. Mach. Learn. Res.}, 2017.

\bibitem[Zeiler(2012)]{zeiler2012adadelta}
Matthew~D Zeiler.
\newblock Adadelta: an adaptive learning rate method.
\newblock \emph{arXiv preprint arXiv:1212.5701}, 2012.

\bibitem[Zhang et~al.(2019)Zhang, Dauphin, and Ma]{zhang2019fixup}
Hongyi Zhang, Yann~N Dauphin, and Tengyu Ma.
\newblock Fixup initialization: Residual learning without normalization.
\newblock In \emph{ICLR}, 2019.

\end{thebibliography}
\bibliographystyle{iclr2020_conference}

\newpage
\appendix
% !TEX encoding = UTF-8
% !TEX Root = 0_main.tex

\section{Proof of Theorem~\ref{theorem: variance_mono}}\label{app:proof_mono}

For ease of notation, we refer $\psi^2(.)$ as $x$ and $\frac{1}{\sigma^2}$ as $\tau^2$. 
Thus, $x \sim \sics(\rho, \tau^2)$ and:
\begin{align}
\vspace{-0.3in}
p(x) = \frac{(\tau^2\rho/2)^{\rho/2}}{\Gamma(\rho/2)}\frac{\exp[\frac{-\rho\tau^2}{2x}]}{x^{1 + \rho/2}}
\quad\mbox{and}\quad
\E[x] = \frac{\rho}{(\rho - 2) \sigma^2} \;(\forall\, \rho > 2)
\label{eqn:inv-chi-sq-pdf-mean}
\vspace{-0.3in}
\end{align}
where $\Gamma(.)$ is the gamma function. Therefore, we have:
\begin{align}
\vspace{-0.3in}
    \E[\sqrt{x}] = \int_{0}^{\infty} \sqrt{x}\, p(x)\,dx = \frac{\tau \sqrt{\rho} \,\Gamma( (\rho - 1)/2)}{\sqrt{2} \,\Gamma(\rho/2)}\; (\forall\, \rho > 4).
    \label{eqn:expect-sqrt-x}
\vspace{-0.3in}
\end{align}
Based on Equation~\ref{eqn:inv-chi-sq-pdf-mean} and \ref{eqn:expect-sqrt-x}, for $\forall\, \rho > 4$, we have:
\begin{align}
\vspace{-0.5in}
\Var[\psi(.)] = \Var[\sqrt{x}] = \E[x] - \E[\sqrt{x}]^2 = \tau^2 (\frac{\rho}{\rho-2} - \frac{\rho \,2^{2\rho - 5}}{\pi}\gB(\frac{\rho-1}{2}, \frac{\rho-1}{2})^2),
\vspace{-0.5in}
\end{align}
where $\gB(.)$ is the beta function. To prove the monotonic property of $\Var[\psi(.)]$, we need to show:

\begin{lemma}
for $t \geq 4$, $\frac{\partial }{\partial t} (\frac{t}{t-2} - \frac{t \,2^{2t - 5}}{\pi}\gB(\frac{t-1}{2}, \frac{t-1}{2})^2) < 0$
\end{lemma}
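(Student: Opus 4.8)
The plan is to reduce the statement to a monotonicity claim for a ratio of Gamma functions and then settle it by one differentiation together with two sharp classical estimates. First I would rewrite the beta term: using $\gB(a,a)=\Gamma(a)^2/\Gamma(2a)$ and the Legendre duplication formula $\Gamma(\frac{t-1}2)\Gamma(\frac t2)=2^{2-t}\sqrt{\pi}\,\Gamma(t-1)$, the subtracted term collapses to
\[
\frac{t\,2^{2t-5}}{\pi}\gB\!\Big(\tfrac{t-1}2,\tfrac{t-1}2\Big)^2=\frac t2\,\frac{\Gamma(\frac{t-1}2)^2}{\Gamma(\frac t2)^2}.
\]
Hence the quantity in the lemma equals $F(t):=\frac{t}{t-2}-\frac t2\,G(t)$ with $G(t):=\Gamma(\frac{t-1}2)^2/\Gamma(\frac t2)^2$ (this is exactly $\Var[\psi(.)]/\tau^2$), and the goal becomes $F'(t)<0$ for $t\ge 4$.

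Next I would differentiate. Logarithmic differentiation gives $G'(t)=-G(t)\,\Delta(t)$ where $\Delta(t):=\Psi(\frac t2)-\Psi(\frac{t-1}2)>0$ and $\Psi=\Gamma'/\Gamma$ is the digamma function (not to be confused with the paper's $\psi(.)$); consequently
\[
F'(t)=-\frac{2}{(t-2)^2}+\frac12\,G(t)\,\bigl(t\,\Delta(t)-1\bigr).
\]
A cheap lower bound settles which regime we are in: from the integral comparison below, $\Delta(t)\ge\ln\frac{t}{t-1}=-\ln(1-\tfrac1t)>\tfrac1t$, so $t\,\Delta(t)>1$ and the second summand is genuinely positive. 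Thus the two terms compete, and the claim is equivalent to the single scalar inequality $G(t)\,(t\,\Delta(t)-1)\,(t-2)^2<4$.

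The two factors are then bounded sharply and separately. For the Gamma ratio I would invoke Wendel's inequality, which for $x=\frac{t-1}2$ gives $\Gamma(x+\tfrac12)/\Gamma(x)\ge x/\sqrt{x+\tfrac12}$, hence $G(t)\le\frac{2t}{(t-1)^2}<\frac{2}{t-2}$. For the digamma gap I would use $\Delta(t)=2\sum_{n\ge0}\big[(2n+t-1)^{-1}-(2n+t)^{-1}\big]$ and a midpoint/integral comparison (the summand is convex decreasing) to obtain $\Delta(t)\le\ln\frac{t-1}{t-2}$. Substituting both reduces the target to $(t-2)\big(t\ln\frac{t-1}{t-2}-1\big)<2$, an elementary one-variable inequality I would close with the Taylor bound $\ln(1+u)<u-\frac{u^2}2+\frac{u^3}3$ at $u=\frac1{t-2}$; this turns the left side into $2-\frac{t}{2(t-2)}+\frac{t}{3(t-2)^2}$, which is below $2$ precisely when $t-2>\frac23$, covering all $t\ge4$.

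The main obstacle is sharpness, not bookkeeping. As $t\to\infty$ one has $G(t)\sim\frac2t$ and $t\,\Delta(t)-1\sim\frac{3}{2t}$, so the left-hand side of the reduced inequality tends to $3$, leaving only the thin margin $3<4$. Any constant-factor slack is therefore fatal: the naive estimate $\Delta(t)<\frac1x=\frac{2}{t-1}$ overshoots the true leading term $\frac1{t-1}$ by a factor of two and pushes the product past $4$. The delicate point is thus to certify that both bounds are tight to leading order — the $\frac{2}{t-1}$ behaviour of $G$ from Wendel and the $\frac{1}{t-1}$ behaviour of $\Delta$ from the midpoint comparison — and to double-check the endpoint $t=4$, where the variance first becomes finite ($\rho>4$) and the inequality is tightest to verify by hand.
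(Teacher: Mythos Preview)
Your proposal is correct and follows the same overall architecture as the paper's proof: reduce the Beta term via the Legendre duplication formula to the Gamma ratio $G(t)=\Gamma(\frac{t-1}{2})^2/\Gamma(\frac t2)^2$, control $G$ by a Gautschi/Wendel-type inequality, control the digamma gap $\Delta(t)=\Psi(\frac t2)-\Psi(\frac{t-1}{2})$ by standard estimates, and finish with an elementary one-variable inequality. The differences are in execution rather than strategy. You collapse the Beta term \emph{before} differentiating, which keeps the derivative clean; the paper differentiates first and only then applies duplication (this time to $\Psi(t-1)$). For the digamma gap you use the series $\Delta(t)=2\sum_{n\ge0}[(2n+t-1)^{-1}-(2n+t)^{-1}]$ together with integral/midpoint comparison to get the logarithmic bracket $\ln\frac{t}{t-1}\le\Delta(t)\le\ln\frac{t-1}{t-2}$, whereas the paper invokes the rational digamma bounds $\ln x-\tfrac1{2x}>\Psi(x)>\ln(x+\tfrac12)-\tfrac1x$, which after cancellation yield directly $t\Delta(t)-1<\tfrac{2}{t-1}$. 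That rational bound lets the paper close in one line via Gautschi, reducing everything to $(t-2)G(t)<2$; your logarithmic bound leaves $\ln\frac{t-1}{t-2}$ in the target and therefore needs the extra Taylor step $\ln(1+u)<u-\tfrac{u^2}{2}+\tfrac{u^3}{3}$. So the paper's endgame is shorter, while your front end is tidier; both routes are valid and the required sharpness (the product tends to $3$ against a threshold of $4$) is met by either set of estimates.
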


\begin{proof}
The target inequality can be re-wrote as 
\begin{align*}
    &  \frac{\partial }{\partial t} (\frac{t}{t-2} - \frac{t \,2^{2t - 5}}{\pi}\gB(\frac{t-1}{2}, \frac{t-1}{2})^2) \\
    &= \frac{-2}{(t-2)^2} 
    - \frac{2^{2t - 5}}{\pi}\gB(\frac{t-1}{2},
    \frac{t-1}{2})^2
    - \frac{t \,2^{2t - 5} \ln 4}{\pi}\gB(\frac{t-1}{2}, \frac{t-1}{2})^2
    \\
    & - \frac{2t \,2^{2t - 5}}{\pi}\gB(\frac{t-1}{2}, \frac{t-1}{2})^2 (\Psi(\frac{t-1}{2})-\Psi(t-1)), \quad \left( \Psi(x) = \frac{\Gamma'(x)}{\Gamma(x)} \right)
    \\ & < 0
    \\
\end{align*}
This inequality is equivalent to:
\begin{align*}
    \frac{64 \pi }{(t-2)^2 4^t \cB(\frac{t-1}{2},\frac{t-1}{2})^2} + 1 + t \ln 4+2t\Psi(\frac{t-1}{2}) \\
    > 2t\Psi(t-1) \stackrel{(i)}{=} t[\Psi(\frac{t-1}{2}) + \Psi(\frac{t}{2})+ \ln 4],
\end{align*}
where $(i)$ is derived from Legendre duplication formula.
Simplify the above inequality, we get:
\begin{align*}
    \frac{64 \pi }{(t-2)^2 4^t \cB(\frac{t-1}{2},\frac{t-1}{2})^2} + 1 + t \Psi(\frac{t-1}{2}) - t\Psi(\frac{t}{2}) > 0,
\end{align*}

We only need to show 
\begin{align*}
    &\frac{64 \pi }{(t-2)^2 4^t \cB(\frac{t-1}{2},\frac{t-1}{2})^2} + 1 + t \Psi(\frac{t-1}{2}) - t\Psi(\frac{t}{2})
    \\
    &\geq \frac{64 \pi }{(t-2)^2 4^t \cB(\frac{t-1}{2},\frac{t-1}{2})^2} + 2 + t ( \ln(t/2)-1/(t/2-0.5) ) - t\ln (t/2)
    \\
    &= \frac{64 \pi }{(t-2)^2 4^t \cB(\frac{t-1}{2},\frac{t-1}{2})^2} - \frac{2}{t-1}
    \\
    &> \frac{64 \pi }{(t-2)^2 4^t \cB(\frac{t-1}{2},\frac{t-1}{2})^2} - \frac{2}{t-2} \geq 0 ,
\end{align*}
where the first inequality is from $\ln(x)-1/(2x)>\Psi(x) > \ln(x+0.5)-1/x$.

Therefore, we only need to show 
\begin{align*}
32 \pi \geq (t-2) 4^t \cB(\frac{t-1}{2},\frac{t-1}{2})^2,
\end{align*}

which is equivalent to  
\begin{align*}
&(t-2) 4^t \cB(\frac{t-1}{2},\frac{t-1}{2})^2
= (t-2) 4^t \frac{\Gamma(\frac{t-1}{2})^4}{\Gamma(t-1)^2}
% = (t-2) 4^t \frac{\Gamma(\frac{t-1}{2})^4}{\Gamma(t-1)^2} 
\\ &\stackrel{(i)}{=} (t-2) 4^t \frac{\Gamma(\frac{t-1}{2})^2}{\Gamma(t/2)^2} 4^{2-t} \pi = 16 \pi (t-2)  \frac{\Gamma(\frac{t-1}{2})^2}{\Gamma(t/2)^2} \leq 32 \pi,
\end{align*}
where $(i)$ is from Legendre duplication formula.

So we only need to show 
\begin{align}
(t-2)  \frac{\Gamma(\frac{t-1}{2})^2}{\Gamma(t/2)^2} \leq 2
\end{align}

Using Gautschi's inequality ($\frac{\Gamma(x+1)}{\Gamma(x+s)} < (x+1)^{1-s}$), we have 
\begin{align}
(t-2)  \frac{\Gamma(\frac{t-1}{2})^2}{\Gamma(t/2)^2}  \leq (t-2) (\frac{t-1}{2})^{-1} = \frac{2(t-2)}{t-1} < 2
\end{align}
\end{proof}

\section{Implementation Details} \label{app:implement}
\subsection{Language Modeling}
Our implementation is based on the previous work~\citep{liu2018efficient}.
Specifically, we use two-layer LSTMs with 2048 hidden states with adaptive softmax to conduct experiments on the one billion words dataset.
Word embedding (random initialized) of 300 dimensions is used as the input and the adaptive softmax is incorporated with a default setting (cut-offs are set to $[4000,40000,200000]$). 
Additionally, as pre-processing, we replace all tokens occurring equal or less than 3 times with as UNK. 
Dropout is applied to each layer with a ratio of $0.1$, gradients are clipped at 5.0.
We use the default hyper-parameters to update moving averages, \ie $\beta_1=0.9$ and $\beta_2=0.999$.
The learning rate is set to start from 0.001, and decayed at the start of 10th epochs. 
LSTMs are unrolled for 20 steps without resetting the LSTM states and the batch size is set to 128.
All models are trained on one NVIDIA Tesla V100 GPU. 

\subsection{Imageine Classification}
We use the default ResNet architectures~\citep{he2016deep} in a public pytorch re-implementation\footnote{\url{https://github.com/bearpaw/pytorch-classification}}.
Specifically, we use $20$-layer ResNet ($9$ Basic Blocks) for CIFAR-10 and 18-layer ResNet ($8$ Basic Blocks) for ImageNet.
Batch size is $128$ for CIFAR-10 and $256$ for ImageNet. The model is trained for $186$ epoches and the learning rate decays at the $81$-th and the $122$-th epoches by $0.1$ on CIFAR-10, while the model is trained for $90$ epoches and the learning rate decays at the $31$-th and the $61$-th epoch by $0.1$ on ImageNet.  
For Adam and RAdam, we set $\beta_1=0.9, \beta_2=0.999$. For SGD, we set the momentum factor as $0.9$. The weight decay rate is $10^{-4}$. Random cropping and random horizontal flipping are applied to training data.

\subsection{Neural Machine Translation}
Our experiments are based on the default Transformers~\citep{vaswani2017attention} implementation from the fairseq package~\citep{ott2019fairseq}. 
Specifically, we use word embedding with 512 dimensions and 6-layer encoder / decoder with 4 head and 1024 feedforward dimensions on the IWSLT14' dataset; use word embedding with 512 dimension and 6-layer encoder/decoder with 8 heads and 2048 feedforward dimensions on the WMT14' dataset. 
Label smoothed cross entropy is used as the objective function with an uncertainty $= 0.1$~\citep{szegedy2016rethinking}. 
We use linear learning rate decay starting from $3e^{-4}$, and the checkpoints of the last $20$ epoches are averaged before evaluation. 
As to the wamrup strategy, we use a linear warmup for Adam in the first $4000$ updates, and set $\beta_2$ to satisfy $\nu=4000$ ($\beta_2 = 0.9995$).
In the IWSLT'14 dataset, we conduct training on one NVIDIA Tesla V100 GPU, set maximum batch size as $4000$, apply dropout with a ratio $0.3$, using weight decay of $0.0001$ and clip the gradient norm at $25$. 
In the WMT'16 dataset, we conduct training on four NVIDIA Quadro R8000 GPUs and set maximum batch size as $8196$.

\section{Downgrading to SGDM} \label{app:foursteps}

As a byproduct determined by math derivations, we degenerated RAdam to SGD with momentum in the first several updates. 
Although this stage only contains several gradient updates, these updates could be quite damaging (e.g., in our Figure~\ref{fig:histogram_2}, the gradient distribution is distorted within 10 gradient updates). 
Intuitively, updates with divergent adaptive learning rate variance could be more damaging than the ones with converged variance, as divergent variance implies more instability. 
As a case study, we performed experiments on the CIFAR10 dataset. 
Five-run average results are summarized in Table~\ref{tab:foursteps}. 
The optimizer fails to get an equally reliably model when changing the first 4 updates to Adam, yet the influence of switching is less deleterious when we change 5-8 updates instead. 
This result verifies our intuition and is in agreement with our theory — the first few updates could be more damaging than later updates. 
By saying that, we still want to emphasize that this part (downgrading to SGDM) is only a minor part of our algorithm design whereas our main focus is on the mechanism of warmup and the derivation of the rectification term.

\begin{table}[h]
\caption{Performance on CIFAR10 (lr = 0.1).}
\begin{center}
\begin{tabularx}{\columnwidth}{l l l *{3}{Y}}
% \hline
\toprule
1-4 steps & 5-8 steps & 8+ steps & test acc & train loss & train error\\
\midrule
% \multicolumn{3}{c}{RAdam}
RAdam & RAdam & RAdam & 91.08 & 0.021 & 0.74 \\
\midrule
Adam (w. divergent var.) & RAdam & RAdam & 89.98 & 0.060 & 2.12 \\
\midrule
SGD & Adam (w. convergent var.) & RAdam & 90.29 & 0.038 & 1.23\\
\bottomrule
\end{tabularx}
\end{center}
\label{tab:foursteps}
\end{table}

\end{document}